\setlist[itemize]{noitemsep, topsep=1pt, label=$\blacktriangleright$, leftmargin=*}
\newcommand{\Real}{\mathbb{R}}
\newcommand{\HC}{\{0,1\}}
\newcommand{\fh}{\texttt{FlyHash}\xspace}
\newcommand{\sh}{\texttt{SimHash}\xspace}
\newcommand{\fbf}{\texttt{FBF}\xspace}
\newcommand{\sbfc}{\texttt{SBFC}\xspace}
\newcommand{\lr}{\texttt{LR}\xspace}
\newcommand{\mlpc}{\texttt{MLPC}\xspace}
\newcommand{\nnc}{\texttt{NNC}\xspace}
\newcommand{\fnn}{\texttt{FlyNN}\xspace}
\newcommand{\knnc}{\texttt{$k$NNC}\xspace}
\newcommand{\onennc}{\texttt{$1$NNC}\xspace}
\newcommand{\sklearn}{\texttt{scikit-learn}\xspace}
\newcommand{\R}{{\mathbb R}}
\newcommand{\N}{{\mathbb N}}
\newcommand{\pr}{{\mathop {\rm Pr}}}
\newcommand{\E}{{{\mathbb E}}}
\newcommand{\argmin}{\mathop {\rm argmin}}
\newcommand{\argmax}{\mathop {\rm argmax}}
\newtheorem{defi}{Definition}
\newtheorem{theorem}{Theorem}
\newtheorem{remark}{Remark}
\newtheorem{lemma}[theorem]{Lemma}
\newtheorem{cor}[theorem]{Corollary}
\newcommand{\hc}[1]{\textcolor{blue}{#1}}
\newcommand{\V}[1]{\mathsf{#1}}
\newcommand{\bh}{\V{h}}
\newcommand{\Bh}{\bm{\bh}}
\newcommand{\bx}{\bm{x}}
\newcommand{\bw}{\bm{w}}
\newcommand{\bM}{\V{M}}
\newcommand{\bc}{\bm{c}}
\title{Federated Nearest Neighbor Classification with a Colony of Fruit-Flies:\\
With Supplement}
\author {
    Parikshit Ram\textsuperscript{\rm 1},
    Kaushik Sinha\textsuperscript{\rm 2}
}
\begin{document}
\maketitle

\begin{abstract}
The mathematical formalization of a neurological mechanism in the olfactory circuit of a fruit-fly as a locality sensitive hash (\fh) and bloom filter (\fbf) has been recently proposed and ``reprogrammed'' for various machine learning tasks such as similarity search, outlier detection and text embeddings. We propose  a novel reprogramming of this hash and bloom filter to emulate the canonical nearest neighbor classifier (\nnc) in the challenging Federated Learning (FL) setup where training and test data are spread across parties and no data can leave their respective parties. Specifically, we utilize \fh and \fbf to create the \fnn classifier, and theoretically establish conditions where \fnn matches \nnc. We show how \fnn is trained {\em exactly} in a FL setup with low communication overhead to produce {\fnn}FL, and how it can be differentially private. Empirically, we demonstrate that (i) \fnn matches \nnc accuracy across 70 OpenML datasets, (ii) {\fnn}FL training is highly scalable with low communication overhead, providing up to $8\times$ speedup with $16$ parties.
\end{abstract}
\section{Introduction}
Biological systems (such a neural networks \cite{kavukcuoglu2010learning,krizhevsky2012imagenet}, convolutions \cite{lecun95convolutional}, dropout \cite{hinton2012improving}, attention mechanisms \cite{larochelle2010learning,mnih2014recurrent}) have served as inspiration to modern deep
learning systems, demonstrating amazing empirical performance in areas
of computer vision, natural language programming and reinforcement
learning. Such learning systems are not biologically viable anymore,
but the biological inspirations were critical. This has motivated a
lot of research into identifying other biological systems that can
inspire development of new and powerful learning mechanisms or provide
novel critical insights into the workings of intelligent systems. Such
neurobiological mechanisms have been identified in the olfactory
circuit of the brain in a common fruit-fly, and have been re-used for
common learning problems such as similarity
search~\citep{dasgupta2017neural, ryali2020bio}, outlier
detection~\citep{dasgupta2018neural}, and more recently for word embeddings~\citep{liang2021can} and  centralized classification~\citep{sinha2021fruit}.
More precisely, in the fruit-fly olfactory circuit, an odor activates a small set of Kenyon Cells (KC) which represent a ``tag'' for the
odor. This tag generation process can be viewed as a natural hashing scheme~\citep{dasgupta2017neural}, termed \fh, which generates a high dimensional but very sparse representation (2000 dimensions with 95\%
sparsity). This tag/hash creates a response in a specific mushroom body output neuron (MBON) -- the MBON-$\alpha'3$ -- corresponding to the perceived novelty of the odor. \citet{dasgupta2018neural}
``interpret the KC$\to$MBON-$\alpha'3$ synapses as a Bloom Filter''
that creates a ``memory'' of all the odors encountered by the fruit-fly, and reprogram this {\em Fly Bloom Filter} (\fbf) as a novelty detection mechanism that performs better than other locality sensitive Bloom Filter-based novelty detectors for neural activity and vision datasets. 

We build upon the reprogramming of the KC$\to$MBON-$\alpha'3$ synapses
as the \fbf to create a supervised classification scheme. We show that
this classifier mimics a nearest-neighbor classifier (\nnc). This scheme
possesses several unique desirable properties that allows for
nearest-neighbor classification in the federated learning (FL) setup
with a low communication overhead. In FL setup the complete training
data is distributed across multiple parties and none of the original
data (training or testing) is to be exchanged between the parties. This is possible because
of the unique high-dimensional sparse structure of the \fh.
We consider this an exercise of leveraging ``naturally occurring''
algorithms to solve common learning problems (which these natural
algorithms were not designed for), resulting in schemes with
unique capabilities. Nearest neighbor classification (\nnc) is a fundamental
nonparametric supervised learning scheme, with various theoretical
guarantees and strong empirical capabilities (especially with an appropriate similarity function). FL has gained a lot of
well-deserved interest in the recent years as, on one hand, models
become more data hungry, requiring data to be pooled from various
sources, while on the other hand, ample focus is put on data privacy
and security, restricting the transfer of data. However, the very
nature of \nnc makes it unsuitable for FL
-- for any test point at a single party, obtaining the nearest
neighbors would {\em naively} either require data from all parties to
be collected at the party with the test point, or require the test
point to be sent to all parties to obtain the per-party neighbors;
both these options violate the desiderata of FL.

We leverage the ability of the \fbf to summarize a data distribution
in a bloom filter to develop a classifier where every class is
summarized with its own \fbf, and inference involves selecting the
class whose distribution (represented by its own \fbf) is most similar
to the test point. We theoretically and empirically show that this
classifier, which we name \fnn (Fly Nearest Neighbor) classifier,
approximately agrees with \nnc.
%
%
We then perform \nnc with \fnn on distributed data under the FL setup with low
communication overhead. The key idea is to train a \fnn
separately on each party -- that is, have a ``colony of fruit-flies''
-- and then perform a low communication aggregation at training time without having to exchange any of the original data. This enables low communication federated nearest-neighbor
classification with {\fnn}FL.
One unique capability enabled by this neurobiological mechanism is that {\fnn}FL can perform \nnc without  transferring the test point to other parties in any form.
We make the following contributions\footnote{A preliminary version of this paper was presented at a recent workshop~\citep{ram2021flynn}.}:
\begin{itemize}
\item We present the \fnn classifier utilizing the \fbf and \fh, and theoretically present precise conditions under which \fnn matches the \nnc. 
\item We present an algorithm for training \fnn with distributed data in the FL setting, with low communication overhead and differential privacy, without requiring exchange of the original data.
\item We empirically compare \fnn to \nnc
and other relevant baselines on 70 classification datasets from the OpenML~\citep{van2013openml} data repository.
\item We demonstrate the scaling of the data distributed \fnn training on datasets of varying sizes to highlight the low communication overhead of the proposed scheme.
\end{itemize}
The paper is organized as follows: We
detail the \fnn classifier and analyze its theoretical properties
in \S \ref{sec:flynn}. We present federated \knnc via distributed \fnn in  \S \ref {sec:algo:dist-fbfc}. We empirically evaluate our proposed methods against baselines in \S \ref{sec:emp}, discuss related work in \S \ref{sec:related} and
conclude with a discussion on  limitations and future work in \S \ref{sec:limit}.
%
\section{The \fnn Classifier}
\label{sec:flynn}
In our presentation, we use lowercase letters ($x$) for scalars and functions (with arguments), boldface lowercase letters ($\bx$) for vectors, lowercase SansSerif letter ($\bh$) for Booleans, boldface lowercase SansSerif letter ($\Bh$) for Boolean vectors, and uppercase SansSerif letter ($\bM$) for Boolean matrices. For a vector $\bx$, $\bx[j]$ denotes its $j^{\mathsf{th}}$ index. For any positive integer $k \in \N$, we use $[k]$ to denote the set $\{1, \ldots, k\}$.

We start this section by recalling $k$-nearest neighbor classification (\knnc).
Given a dataset of labeled points  $S = \{(\bx_i, y_i)\}_{i=1}^n \subset
\R^d \times [L]$ from $L$ classes, and a similarity function $s: \R^d \times \R^d \to \R_+$, a test point $\bx \in \R^d$ is labeled by the \knnc based on its $k$-nearest neighbors $S^k(\bx)=\argmax_{R \subset S\colon |R| = k} \sum_{(\bx_i, y_i) \in R} s(\bx, \bx_i)$ as:
%
\begin{equation} \label{eq:knnc}
\hat{y} \gets \argmax_{y \in [L]} \left | \left\{ (\bx_i, y_i) \in S^k(\bx): y_i = y \right \} \right |,
\end{equation}
In the federated version of \knnc, the data is distributed across $\tau$ parties, each with a chunk of the data $S_t, t \in [\tau]$. For a test point $\bx$ at a specific party $t_{\mathsf{in}}$, the classification should be based on the nearest-neighbors of $\bx$ over the pooled data $S_1 \cup S_2 \cdots \cup S_\tau$. A critical desiderata in FL is to be robust to the fact that the per-party data $S_t$ are not obtained from identical distributions for all $t \in [\tau]$ -- the distributions that generate $S_t$ and $S_{t'}$ for $t \not= t'$ can be significantly different. We refer to this as the ``non-IID-ness of the per-party data''.

We leverage the locality sensitive \fh~\citep{dasgupta2017neural} in our proposed scheme, focusing on the binarized version~\citep{dasgupta2018neural}.
For $\bx \in \Real^d$, the \fh $h \colon \Real^d \to \HC^{m}$ is defined as,
\begin{equation} \label{eq:flyhash}
    h(\bx) = \Gamma_\rho (\bM \bx),
\end{equation}
where $\bM \in \HC^{m \times d}$ is the randomized sparse lifting
binary matrix with $s \ll d$ nonzero entries in each row, and
$\Gamma_\rho \colon \Real^{m} \to \HC^{m}$ is the winner-take-all function converting a vector in $\Real^m$ to one in $\HC^m$ by setting
the highest $\rho \ll m$ elements to $1$ and the rest to zero.
\fh projects up or {\em lifts} the data dimensionality ($m \gg d$).

The {\em Fly Bloom Filter} (\fbf) $\bw \in (0,1)^m$ summarizes a dataset and is subsequently used for novelty detection~\citep{dasgupta2018neural} with novelty scores for any point $\bx$ proportional to $\bw^\top h(\bx)$ -- higher values indicate high novelty of $\bx$. To learn $\bw$ from a set $S$, all its elements are initially set to $1$. For an ``inlier'' point $\bx_{\mathsf{in}} \in S$ with \fh $\Bh_{\mathsf{in}}$, $\bw$ is updated by ``decaying'' (with a multiplicative factor) the intensity of the elements in $\bw$ corresponding to the nonzero elements in $\Bh_{\mathsf{in}}$. This ensures that some $\bx \approx \bx_{\mathsf{in}}$ receives a low novelty score $\bw^\top h(\bx)$. For a novel point $\bx_{\mathsf{nv}}$ (with \fh $\Bh_{\mathsf{nv}}$) not similar to any $\bx \in S$, the locality sensitivity of \fh implies that, with high probability, the elements of $\bw$ corresponding to the nonzero elements in $\Bh_{\mathsf{nv}}$ will be close to 1 since their intensities will not have been decayed much, implying a high novelty score $\bw^\top \Bh_{\mathsf{nv}}$.
\subsection{\fnn Algorithm: Training and Inference}
\label{sec:algo:1}
We leverage this mechanism for classification by using the \fbf to summarize each class $l\in [L]$  separately -- the per-class \fbf encodes the local neighborhoods of each class, and the high dimensional sparse nature of \fh (and consequently \fbf) summarizes classes with multi-modal distributions while reducing inter-class \fbf overlap.
\begin{algorithm}[tb]
{\scriptsize
\DontPrintSemicolon
\SetAlgoLined
\caption{\fnn training with training set $S \subset \Real^d \times
  [L]$, lifted dimensionality $m \in \N$, $s \ll d$ nonzeros in each row of the
  lifting matrix $\bM$, $\rho \ll m$ nonzeros in the \fh, decay rate $\gamma \in [0, 1)$, random seed $R$, and inference with test
  point $\bx \in \Real^d$.  }
\label{alg:train-infer}
\SetKwProg{train}{Train{\fnn}$(S, m, \rho, s, \gamma, R)$}
{}
{end}
\SetKwProg{infer}{Infer{\fnn}$(\bx, \bM, \rho, \{\bw_l, l \in [L]\})$}
{}
{end}
\train{}{
  Sample $\bM \in \HC^{m \times d}$ with seed $R$\;
  Initialize $\bw_1,\ldots,\bw_L \gets \mathbf{1}_m \in (0,1)^m$\;
  \For{$(\bx,y)\in S$}{
    $\Bh \gets \Gamma_\rho (\bM \bx)$\;
    $\bw_y[i] \gets \gamma \cdot  \bw_y[i]\ \forall i \in [m]\colon \Bh[i] = 1$\;
  }
  \KwRet{$ (\bM, \{ \bw_l, l \in [L] \}) $} \;
}
\infer{}{
  $\Bh \gets \Gamma_\rho \left( \bM \bx \right)$ \;
  \KwRet{$\argmin_{l \in [L]} \bw_l^\top \Bh$}\;
}
}
\end{algorithm}
\paragraph{\fnn training.}
Given a training set $S \subset \R^d \times [L]$, the learning of the
per-class {\fbf}s $\bw_l \in (0,1)^m, l \in [L]$ is detailed in the
Train\fnn subroutine in Algorithm~\ref{alg:train-infer}. We initialize
the \fh by randomly generating the
$\bM$ (line 2). The per-class \fbf $\bw_l$ are initialized to
$\mathbf{1}_m$ (line 3).
For a training example $(\bx, y) \in S$, we first generate the \fh $\Bh=h(\bx) \in \HC^m$ using equation~\ref{eq:flyhash} (line 5).
Then, the \fbf $\bw_y$ (corresponding to $\bx$'s class $y$) is updated
with the \fh $\Bh$ as follows -- the elements of $\bw_y$ corresponding to
the nonzero bit positions of $\Bh$ are decayed, and the rest of the entries of $\bw_y$ are left as is (line 6); the remaining {\fbf}s $\bw_l, l \not= y \in [L]$ are not updated at all. The decay is achieved by multiplication with a factor of $\gamma \in [0,1)$ -- large  $\gamma$ implies slow decay in the \fbf intensity; a small value of $\gamma$ triggers rapid decay ($\gamma = 0$ makes the {\fbf}s binary).
This whole process ensures that $\bx$ (and points similar to $\bx$) are
considered to be an ``inlier'' with respect to $\bw_y$.
\paragraph{\fnn inference.}
The \fbf $\bw_l$ for class $l \in
[L]$ are learned such that a point $\bx$ with label $l$ appears
 as an inlier with respect to $\bw_l$ (class
$l$); the example $(\bx, y)$ does not affect the other class {\fbf}s $\bw_{l}, l \not= y, l \in [L]$. This implies that a  point $\bx'$ similar to $\bx$ will have a low novelty score $\bw_y^\top h(\bx')$ motivating our inference rule -- for a test point $\bx$, we compute the per-class novelty scores and predict the label as $\hat{y} \gets \arg \min_{l \in [L]} \bw_l^\top h(\bx)$.
This is detailed in the Infer{\fnn} subroutine in Algorithm~\ref{alg:train-infer}.
\subsection{Analysis of \fnn}\label{sec:theory}
We first present the computational complexities of Algorithm~\ref{alg:train-infer} for a specific configuration of its hyper-parameters. All proofs are presented in Appendix~\ref{asec:alg1-proofs}.
%
\begin{lemma}[\fnn training] \label{thm:alg1-train}
Given a training set $S \subset \R^d \times [L]$ with $n$ examples,
the Train\fnn subroutine in Algorithm~\ref{alg:train-infer} with the
lifted \fh dimensionality $m$, number of nonzeros $s$ in each row of
$\bM \in \HC^{m \times d}$, number of nonzeros $\rho$ in
\fh $h(\bx)$ for any $\bx \in \R^d$, and decay rate $\gamma \in [0, 1)$ takes time $O(n m \cdot \max\{s, \log
\rho\})$ and has a memory overhead of $O(m \cdot \max\{s, L\})$.
\end{lemma}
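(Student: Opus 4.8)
The plan is to account separately for the one-time initialization in lines 2--3 and the per-example work in the loop of lines 4--6, then argue that the loop dominates the overall rate. First I would bound the setup: sampling $\bM \in \HC^{m \times d}$ with $s$ nonzeros per row costs $O(ms)$ (generating $ms$ random positions and values), and initializing the $L$ per-class filters $\bw_1, \ldots, \bw_L$ to $\mathbf{1}_m$ costs $O(Lm)$. Since each of the $L$ classes appears in $S$ we have $L \le n$, so $O(Lm) = O(nm)$; and because $n \ge 1$, the term $O(ms)$ is likewise absorbed into the loop bound derived below. Thus neither setup term inflates the final time.

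The crux is the per-iteration cost of computing the \fh $\Bh = \Gamma_\rho(\bM \bx)$ in line 5 and applying the decay in line 6. For the matrix--vector product $\bM \bx$, each of the $m$ rows of $\bM$ has only $s$ nonzeros, so each row's inner product with $\bx$ costs $O(s)$ and the full product costs $O(ms)$ --- crucially $O(ms)$ rather than the dense $O(md)$. To then apply the winner-take-all $\Gamma_\rho$, I would avoid fully sorting the $m$ coordinates (which would give $O(m \log m)$); instead, maintaining a bounded min-heap of size $\rho$ while streaming over the $m$ entries of $\bM\bx$ identifies the top-$\rho$ indices in $O(m \log \rho)$ time. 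The decay update in line 6 touches only the $\rho \le m$ coordinates where $\Bh[i]=1$, costing $O(\rho)$, which is dominated by $O(m\log\rho)$ since $\rho \le m \le m\log\rho$. Hence one iteration costs $O(ms + m\log\rho) = O(m \cdot \max\{s, \log\rho\})$, and summing over the $n$ examples yields the claimed $O(nm \cdot \max\{s, \log\rho\})$ training time.

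For the memory overhead, I would tally the quantities that must persist: the sparse $\bM$ is stored via its $ms$ nonzero positions, taking $O(ms)$ space, and the $L$ per-class filters $\bw_l \in (0,1)^m$ together take $O(Lm)$ space; the transient per-example vectors $\bM\bx$ and $\Bh$ add only $O(m)$ and can be reused across iterations. Summing the two persistent terms gives $O(ms + Lm) = O(m \cdot \max\{s, L\})$, as stated.

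I expect the one genuinely nontrivial step to be justifying the $O(m\log\rho)$ cost of $\Gamma_\rho$: it rests on the observation that winner-take-all requires only \emph{selecting} (not fully ordering) the top $\rho$ of $m$ values, which a size-$\rho$ heap accomplishes in a single streaming pass. Everything else is routine bookkeeping of the sparse matrix--vector operation and the vector initializations, together with the elementary facts that $\rho \le m$ and $\log\rho \ge 1$, which let the cheaper $O(\rho)$ update and the $O(ms)$/$O(Lm)$ setup terms fold cleanly into the stated bounds.
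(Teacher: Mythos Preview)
Your proposal is correct and follows essentially the same line-by-line accounting as the paper's proof. You are in fact more explicit in two places the paper leaves terse: you justify the $O(m\log\rho)$ cost of $\Gamma_\rho$ via a size-$\rho$ heap (the paper simply asserts this), and you argue $L \le n$ to absorb the $O(Lm)$ initialization into the loop bound, whereas the paper just invokes the standing assumption $L \ll n$.
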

\begin{lemma}[\fnn inference] \label{thm:alg1-infer}
Given a trained \fnn, the inference for $\bx \in \R^d$
with the Infer\fnn subroutine in Algorithm~\ref{alg:train-infer}
takes time $O\left(m \cdot \max\left\{s, \log \rho, \nicefrac{\rho L}{m} \right\}\right)$ with a memory overhead of $O(\max\{m, L\})$.
\end{lemma}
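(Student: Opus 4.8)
The plan is to decompose the Infer\fnn subroutine into its two constituent operations -- the \fh computation $\Bh \gets \Gamma_\rho(\bM \bx)$ and the per-class scoring $\argmin_{l \in [L]} \bw_l^\top \Bh$ -- and to bound the time and auxiliary space of each separately before combining them.

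\textbf{FlyHash computation.} First I would bound the cost of forming $\bM \bx$. Since each of the $m$ rows of $\bM \in \HC^{m \times d}$ carries only $s \ll d$ nonzero entries, each coordinate $(\bM \bx)[i]$ is an inner product costing $O(s)$, so the full vector $\bM \bx \in \R^m$ is produced in $O(ms)$ time using $O(m)$ scratch space. Next, applying the winner-take-all map $\Gamma_\rho$ amounts to identifying the $\rho$ largest of these $m$ coordinates; I would do this with a single pass maintaining a min-heap of size $\rho$, giving $O(m \log \rho)$ time and $O(\rho)$ extra space and yielding the $\rho$-sparse Boolean vector $\Bh$. This phase therefore runs in $O(m \max\{s, \log\rho\})$ time.

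\textbf{Per-class scoring.} The key observation is that $\Bh$ has exactly $\rho$ nonzero positions, so each inner product $\bw_l^\top \Bh$ reduces to summing the $\rho$ entries of $\bw_l$ indexed by the support of $\Bh$, costing $O(\rho)$ per class rather than $O(m)$. Computing all $L$ scores and selecting the $\argmin$ then costs $O(\rho L + L) = O(\rho L)$ time and $O(L)$ space. Adding the two phases gives total time $O(ms + m\log\rho + \rho L)$, which I would rewrite as $O(m\max\{s, \log\rho, \nicefrac{\rho L}{m}\})$ by factoring out $m$, matching the claimed bound. For memory, the scratch vector for $\bM\bx$ uses $O(m)$, the heap $O(\rho)$, and the score array $O(L)$; since $\rho \ll m$ these combine to $O(\max\{m, L\})$.

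The analysis is essentially routine, so there is no deep obstacle; the only genuinely substantive steps are the two places where sparsity must be exploited to avoid the naive bounds -- using a size-$\rho$ heap rather than a full sort so that $\Gamma_\rho$ contributes $\log\rho$ (not $\log m$), and using the $\rho$-sparsity of $\Bh$ so that each class score costs $O(\rho)$ (not $O(m)$). Collapsing $O(ms + m\log\rho + \rho L)$ into the stated $\max$ form is then just algebraic rearrangement.
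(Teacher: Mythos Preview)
Your proposal is correct and follows essentially the same approach as the paper: decompose into the \fh step (costing $O(m(s+\log\rho))$ time and $O(m)$ memory) and the scoring step (costing $O(\rho L)$ time via the $\rho$-sparsity of $\Bh$ and $O(L)$ memory), then combine. If anything, your argument is slightly more explicit than the paper's, since you spell out the size-$\rho$ min-heap justification for the $\log\rho$ factor, which the paper leaves implicit.
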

\begin{remark}
For any test point $\bx \in \R^d$ with \fh $h(\bx)$, and a
large number of labels (large $L$), if the $\arg \min_{l \in [L]}
\bw_l^\top h(\bx)$ can be solved via {\bf fast maximum inner product
search}~\citep{koenigstein2012efficient, ram2012maximum} in
time $\beta(L)$ sublinear\footnote{For example, $\beta(L) \sim O(\log L)$ using randomized partition trees~\citep{keivani2017improved, keivani2018improved}.} in $L$, then the overall inference time for
$\bx$ would be given by $O\left(m\cdot \max\left\{s, \log \rho, \nicefrac{\rho \beta(L)}{m} \right\}\right)$ which is sublinear in $L$.
\end{remark}
Next we present learning theoretic properties of \fnn. The novelty score $\bw_l^\top h(\bx)$ of any test point $\bx$ in \fnn corresponds to how ``far'' $\bx$ is from the distribution of class $l$ encoded by $\bw_l$, and using the class with the minimum novelty score to label $\bx$ is equivalent to labeling $\bx$ with the class whose distribution is ``closest'' to $\bx$.
With this intuition, we identify precise conditions where \fnn mimics \knnc.
All proofs are presented in Appendix~\ref{asec:alg1-proofs_learning-theory}.

We present our  analysis for binary classification with $\gamma=0$,
where the \fnn is
trained on  training set $S=\{(\bx_i,y_i)\}_{i=1}^{n}\subset
\R^d \times \{0,1\}$.
Let $S^{0}=\{(x,y)\in S: y=0\}$, $S^{1}=\{(x,y)\in S: y=1\}$
and let $\bw_{0},
\bw_{1}\in\{0,1\}^m$ be the $\fbf$s constructed using $S^{0}$ and $S^{1}$
respectively.
Without loss of generality, for any test point $\bx$,
assume that the majority of its $k$ nearest neighbors from $S$ has class label 1. Thus
\knnc will predict $\bx$'s class label to be 1.
We aim to show that
$\E_{\bM}(\bw_{1}^{\top} h(\bx))< \E_{\bM}(\bw_{0}^{\top}h(\bx))$ (expectation of the random $\bM$ matrix) so that \fnn will predict, in expectation, $\bx$'s label to be 1.
A high probability
statement will then follow using standard concentration bounds. If $\bx$'s nearest neighbor is arbitrarily close to $\bx$ and has label 0 (while the label of the majority of its $k$ nearest neighbors still being 1)  then we would expect  $\bw_{0}^{\top} h(\bx)< \bw_{1}^{\top}h(\bx)$ with high probability, thereby, \fnn will label $\bx$ as $0$. 
To avoid such a situation, we assume a margin $\eta > 0$ between the classes~\citep{gottlieb2014samplecompression} defined as:
\begin{defi}\label{def:margin}
We define the \emph{margin} $\eta$ of the training set $S$ to be $\eta\stackrel{\Delta}{=}\min_{\bx\in S^0,\bx'\in S^1}\|\bx-\bx'\|_{\infty}$.
\end{defi}
If $\lceil\nicefrac{k+1}{2}\rceil$ of $\bx$'s nearest neighbors from $S$ are at a distance at most $\eta/2$ from $\bx$, then all of those $\lceil\nicefrac{k+1}{2}\rceil$ examples must have the same class label to which \knnc agrees. This also ensures that the closest point to $\bx$ from $S$ having opposite label is at least $\eta/2$ distance away. We show next that this is enough to ensure that prediction of \fnn on any test point $\bx$ from a \textbf{\emph{permutation invariant}} distribution agrees with the prediction of \knnc with high probability (the training set can be from any distribution). Note that $P$ is a permutation invariant distribution over $\R^d$ if for any permutation $\sigma$ of $[d]$ and any $\bx\in\R^d$, $P(x_1,\ldots,x_d)=P(x_{\sigma(1)},\ldots,x_{\sigma(d)})$.
\begin{theorem}\label{th:knnc}
Fix $s, \rho, m$ and  $k$. Given a training set $S$ of size $n$ and a test example $\bx\in\R^d$ sampled from a permutation invariant distribution,
let $\bx_*$ be its $\left(\lceil\nicefrac{k+1}{2}\rceil\right)^{th}$  nearest neighbor from $S$ measured using $\ell_{\infty}$ metric. If $\|\bx-\bx_*\|_{\infty}\leq\min\{\nicefrac{\eta}{2},O(\nicefrac{1}{s})\}$ then,  $\hat{y}_{\fnn}=\hat{y}_{\knnc}$ with probability $\geq 1-\left(O(\nicefrac{\rho n}{m})+e^{-O(\rho)}\right)$, where $\hat{y}_{\fnn}$ and $\hat{y}_{\knnc}$ are respectively the predictions of \fnn and  \knnc.
\end{theorem}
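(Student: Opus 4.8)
The plan is to show that, under the stated hypotheses, the per-class novelty scores satisfy $\bw_1^\top h(\bx) < \bw_0^\top h(\bx)$ with the claimed probability, since the Infer\fnn rule then outputs $\hat{y}_{\fnn}=1$, which we will have shown equals $\hat{y}_{\knnc}$. First I would pin down the geometry implied by the assumption. Since $\bx_*$ is the $\lceil\nicefrac{k+1}{2}\rceil$-th nearest neighbor and $\|\bx-\bx_*\|_\infty \le \min\{\nicefrac{\eta}{2}, O(\nicefrac{1}{s})\}$, at least $t = \lceil\nicefrac{k+1}{2}\rceil$ training points lie within $\ell_\infty$-distance $\epsilon = \min\{\nicefrac{\eta}{2}, O(\nicefrac{1}{s})\}$ of $\bx$. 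By the margin (Definition~\ref{def:margin}) no class-$0$ and class-$1$ points are within $\eta$ of each other, so a triangle-inequality argument shows these $t$ nearby points all share one label and every opposite-label point is at distance $\ge \nicefrac{\eta}{2}$; combined with the WLOG assumption that the majority label is $1$, this forces the nearby label to be $1$, so these $t > \nicefrac{k}{2}$ points are a strict majority among the $k$ neighbors and $\hat{y}_{\knnc}=1$. Hence it remains only to compare the two \fnn scores.

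Second, I would rewrite the scores using $\gamma=0$. Here $\bw_l[i]=0$ exactly when some class-$l$ training point activates bit $i$, so $\bw_l^\top h(\bx) = \rho - |\{i : \Bh[i]=1,\ \bw_l[i]=0\}|$ counts the active bits of $\bx$ that are \emph{not} matched by class $l$. Thus $\bw_1^\top h(\bx) < \bw_0^\top h(\bx)$ is equivalent to class $1$ matching strictly more of $\bx$'s active bits than class $0$. The engine is the locality sensitivity of \fh: the probability that an active bit of $\bx$ is also active for a point $\bx'$ is a decreasing function of $\|\bx-\bx'\|_\infty$. The crucial quantitative input is that for $\|\bx-\bx'\|_\infty \le \epsilon = O(\nicefrac{1}{s})$ the pre-activation perturbation at any row satisfies $|(\bM(\bx'-\bx))[i]| \le s\,\|\bx-\bx'\|_\infty = O(1)$, so a bit that is a top-$\rho$ ``winner'' for $\bx$ remains a winner for each close class-$1$ neighbor with at least constant probability. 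The permutation invariance of the test distribution is used here to make the row pre-activations $(\bM\bx)[i]$ exchangeable across $i$, so each bit is active with probability exactly $\nicefrac{\rho}{m}$ and the collision probabilities depend only on the distance; this is what lets the monotone comparison and the random-collision estimate below go through cleanly.

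Third, I would split the analysis into the two failure modes matching the two error terms. For class $0$ (all points at distance $\ge \nicefrac{\eta}{2}$), the active bits behave like random winners, so a given active bit of $\bx$ is matched by a fixed far point only with the baseline probability $O(\nicefrac{\rho}{m})$; a union/Markov bound over the $\le n$ points and the $\rho$ active bits of $\bx$ shows that with probability $\ge 1 - O(\nicefrac{\rho n}{m})$ \emph{no} class-$0$ point matches \emph{any} active bit, i.e.\ $\bw_0^\top h(\bx) = \rho$. For class $1$, the $t$ close neighbors each match a given active bit with constant probability, so the expected number of matched active bits is $\ge c\rho$ for a constant $c>0$; a concentration bound over the $\rho$ active bits then gives $\bw_1^\top h(\bx) \le \rho(1 - \nicefrac{c}{2}) < \rho$ except with probability $e^{-O(\rho)}$. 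A final union bound over these two events yields $\bw_1^\top h(\bx) < \rho = \bw_0^\top h(\bx)$, hence $\hat{y}_{\fnn} = 1 = \hat{y}_{\knnc}$, with probability $\ge 1 - (O(\nicefrac{\rho n}{m}) + e^{-O(\rho)})$.

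The main obstacle is the winner-take-all step $\Gamma_\rho$: because it is a global top-$\rho$ selection, the event ``bit $i$ is active'' couples all $m$ rows, so neither the single-bit collision probability under a perturbation nor the sum defining $\bw_l^\top h(\bx)$ is a clean product of independent terms. Controlling how a bounded ($O(1)$) perturbation of one pre-activation changes its membership in the top-$\rho$ set -- and handling the resulting dependence when concentrating the score over the $\rho$ active bits -- is the delicate part; the $O(\nicefrac{1}{s})$ distance scaling (to bound the perturbation) and the permutation-invariance-induced exchangeability (to pin down the rank/activation distribution) are precisely the tools I would lean on to tame it.
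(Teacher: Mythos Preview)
Your overall architecture---pin down the geometry via the margin, then show the class-$1$ score is strictly smaller than the class-$0$ score by combining ``near points collide with constant probability'' and ``far points collide with baseline probability''---is exactly the paper's strategy. The execution, however, departs from the paper in two places that create genuine gaps.

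\textbf{The class-$0$ bound is too ambitious.} You aim to show that with probability $1-O(\nicefrac{\rho n}{m})$ \emph{no} class-$0$ point matches \emph{any} active bit of $\bx$, i.e.\ $\bw_0^\top h(\bx)=\rho$. But the per-bit collision probability with a fixed far point is $q(\bx,\bx')$, which is only $\nicefrac{\rho}{m}$ \emph{in expectation over the random $\bx$}, not pointwise; and a union/Markov bound over $\rho$ active bits and $n_0$ points yields $O(\nicefrac{\rho^2 n}{m})$, not $O(\nicefrac{\rho n}{m})$. The paper avoids this by aiming lower: it bounds the \emph{expected} normalized score $\mu_0=\E_{\bM}[\bw_0^\top h(\bx)/\rho]\ge 1-\sum_{\bx'\in S^0}q(\bx,\bx')$, applies Markov to the random test point $\bx$ to get $\sum q\le\nicefrac14$ (hence $\mu_0\ge\nicefrac34$) with probability $1-O(\nicefrac{\rho n}{m})$, and only then concentrates over $\bM$ to obtain $\bw_0^\top h(\bx)/\rho\ge\nicefrac58$. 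Symmetrically, the class-$1$ side gives $\mu_1\le\nicefrac12$ deterministically (via $q(\bx,\bx_*)\ge\nicefrac12$ when $\|\bx-\bx_*\|_\infty\le\Delta/s$), and concentration gives $\bw_1^\top h(\bx)/\rho\le\nicefrac35$. The comparison is $\nicefrac58>\nicefrac35$, not $\rho$ versus $<\rho$.

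\textbf{The role of permutation invariance and the resolution of the WTA coupling.} You attribute exchangeability of the pre-activations $(\bM\bx)[i]$ to permutation invariance of $\bx$, but those are already i.i.d.\ over $i$ because the rows of $\bM$ are i.i.d.; permutation invariance is used elsewhere, namely to establish $\E_{\bx}[q(\bx,\bx')]=\nicefrac{\rho}{m}$ for a fixed $\bx'$ (this is what drives the $O(\nicefrac{\rho n}{m})$ term). The coupling you correctly flag as the main obstacle---the global top-$\rho$ selection---is handled in the paper not by perturbation analysis but by replacing $\Gamma_\rho$ with the per-point threshold $\tau_{\bx}(\nicefrac{\rho}{m})$ of \citet{dasgupta2018neural}: bit $j$ is declared active iff $\theta_j^\top\bx\ge\tau_{\bx}$, which makes the $m$ bit-tuples $(\Bh_1[j],\ldots,\Bh_n[j],\Bh[j],\bw_0[j],\bw_1[j])$ i.i.d.\ across $j$ and renders both the expected-score formulas and the multiplicative Chernoff bounds (your $e^{-O(\rho)}$ term) straightforward. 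Without this device, your concentration step over the $\rho$ active bits is not justified.
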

\begin{remark}
For any $\delta\in(0,1)$,  the failure probability of the above theorem can be restricted to at most $\delta$ by setting $\rho=\Omega(\log \nicefrac{1}{\delta})$ and $m=\Omega(\nicefrac{n \rho}{\delta})$.
\end{remark}
\begin{remark}
We established conditions under which the predictions of \fnn agrees with
that of \knnc with high probability. \knnc is a non-parametric
classification method with strong theoretical guarantee: as $|S| = n
\to \infty$, the \knnc almost surely approaches the Bayes optimal error rate.
Therefore, by establishing
the connection between \fnn and \knnc, \fnn has the same
statistical guarantee under the conditions of
Theorem~\ref{th:knnc}.
\end{remark}
\begin{remark}
For $k=1$, prediction of \onennc on any $\bx\in\R^d$ agrees with the label of its nearest neighbor $\bx'$ and any point in the training set  having class label different from the label of $\bx'$ is farther away from $\bx$. Here we do not need any dependence on margin $\eta$ and the condition $\|\bx-\bx'\|_{\infty}=O(\nicefrac{1}{s})$ is enough to get a statement similar to Theorem~\ref{th:knnc} that relate  \fnn to  \onennc.
\end{remark}
%
\section{Federated \nnc via Distributed \fnn}\label{sec:algo:dist-fbfc}%
\begin{algorithm}[t]
{\scriptsize  
\DontPrintSemicolon%
\SetAlgoLined%
\caption{\hc{Federated Differentially Private} \fnn training
  \hc{with $\tau$ parties $V_t, t \in [\tau]$ each with training set $S_t$}
  with \hc{DP parameters $\epsilon$ and number of samples $T$}. The boolean {\tt IS\_DP} toggles DP.}
\label{alg:dist-train-infer}
\SetKwProg{train}{Train{\fnn}\hc{FLDP}$(\hc{\{S_t, t \in [\tau]\}}, m, \rho, s, \gamma,{\tt IS\_DP}, \epsilon, T)$}
{}
{end}
\SetKwProg{trainlowcomm}{\hc{LC}Train{\fnn}\hc{FLDP}$(\hc{\{S_t, t \in [\tau]\}}, m, \rho, s, \gamma,\tt{IS\_DP})$}
{}
{end}
\SetKwProg{infer}{\hc{LC}Infer{\fnn}\hc{FLDP}$(\bx, \bM, \rho, \{\bw_l^t, l \in [L], t \in [\tau]\},\tt{IS\_DP})$}
{}
{end}
\SetKwProg{DP}{DP$(\{\bw_l,l\in {L}\},\epsilon,T)$}
{}
{end}
\train{}{
  \hc{Generate random seed $R$ \& broadcast to all $V_t, t \in [\tau]$}\;
  
  \For{\hc{each party $V_t, t \in [\tau]$}}{
    $(\bM, \{\bw_l^t, l \in [L]\}) \gets$ Train\fnn$(\hc{S_t}, m, \rho, s, \gamma, \hc{R})$\;
     \If{\hc{${\tt IS\_DP}$}}{\{\hc{$\bw^t_l,l\in [L]\} \gets$ DP$((\{\bw_l^t,l\in {L}\}, \epsilon/\tau, T))$}}
  }
  \tcp*[l]{All-reduction over all $\tau$ parties}
  \hc{
    $\hat{\bw}_l[i] \gets \gamma^{\sum_{t \in [\tau]}  \log_{\gamma} \bw_l^t[i]} \forall i \in [m], \forall l \in [L]$
  }\;
  \KwRet{
    $ (\bM, \{ \hat{\bw}_l, l \in [L] \}) $
    \hc{on each party $V_t, t \in [\tau]$}
  }
}
\DP{}{
  $\bw \gets [(\bw_1)^{\top},\ldots,(\bw_L)^{\top}]^{\top}$\;
  $\bc[i] \gets \log_{\gamma} \bw[i]\ \forall i \in [m\times L]$\;
  $\mathcal{R} \gets \{\}$\;
  \For{$j\in [T]$}{
    Sample $i_j\in[m\times L]$ with probability $\propto \exp\left(\epsilon\bc[i_j] / 4T \right)$\;
    $\bc[i_j] \gets \max\{\bc[i_j]+\eta,0\}$, where $\eta \sim \text{\sf Laplace}(2T/\epsilon)$\;
    $\mathcal{R} \gets \mathcal{R}\cup \{i\}$\;
  }
  $\bc[i] \gets  0,  \forall i\in [m\times L]\setminus \mathcal{R}$\;
  $\tilde{\bw}[i] \gets \gamma^{\bc[i]}, \forall i\in [m\times L]$\;
  $ [(\tilde{\bw}_1)^{\top},\ldots,(\tilde{\bw}_L)^{\top}]^{\top}\gets \tilde{\bw}$\;
  \KwRet{$ (\{ \tilde{\bw}_l, l \in [L] \}) $} \;
}
}
\end{algorithm}%
For federated learning where the data
$S$ is spread across $\tau$ parties with each
party $V_t, t \in [\tau]$ having its own chunk $S_t$, we present a distributed
{\fnn}FL learning scheme in Algorithm~\ref{alg:dist-train-infer}, highlighting the differences from the original \fnn learning in \hc{Blue} text. The boolean {\tt IS\_DP} toggles the differential privacy (DP) of the training. This scheme ensures {\em inter-party privacy}, protecting against leakage even with colluding parties.
%
The proofs for the analyses in this section are presented in Appendix~\ref{asec:alg2-proofs}].

In Train{\fnn}\hc{FLDP}, all the parties
$V_t, t \in [\tau]$ have the complete \fnn model at the conclusion of the training, and are {\em able to
perform no-communication inference on any new test point $\bx$
independent of the other parties} using the Infer\fnn subroutine in Algorithm~\ref{alg:train-infer}. The learning commences by generating and broadcasting a
random seed $R$ to all parties $V_t, t \in [\tau]$ (line 2); we assume
that all parties already have knowledge of the total number of labels
$L$. Then each party $V_t$ independently invokes Train{\fnn} (Algorithm~\ref{alg:train-infer}) on its chunk $S_t$ and obtains the per-class private or non-private \fbf $\{\bw_l^t, l \in [L]\}$ depending on the status of the boolean variable $\tt{IS\_DP}$ and the invocation of the \hc{DP} subroutine (lines 3-8). Finally, a specialized {\em all-reduce} aggregates all the per-class
{\fbf}s $\{\bw_l^t, l \in [L]\}$ across all parties $t \in [\tau]$
to obtain the final {\fbf}s $\hat{\bw}_l, l \in [L]$ on all parties (line 9). In the \hc{DP} module, the input \fbf $\{\bw_l, l \in [L]\}$ are concatenated and the element-wise $\log$ values (counts) are stored in a vector $\bc$ (lines 13-14). Then the largest $T$ indices of $c$ are selected iteratively using an exponential mechanism and Laplace noise are added to these selected entries (lines 16-20). The remaining $(m\times L) -T$ entries of $\bc$ are set to zero (line 21), all the entries of $\bc$ are exponentiated and the differentially private \fbf $\{\tilde{\bw}_l^t, l \in [L]\}$ are returned (lines 22-24).
The following claim establishes exact parity between the non-DP federated and original training of \fnn:
\begin{theorem}[Non-private Federated training parity]\label{thm:parity-1}
Given training sets $S_t \subset \R^d \times [L]$ on each party $V_t,
t \in [\tau]$, and a {\fnn} configured as in Lemma~\ref{thm:alg1-train}, if the boolean variable $\tt{IS\_DP}$ is False, then the
  per-party final \fnn $\{\hat{\bw}_l, l \in [L]\}$
  (Alg.~\ref{alg:dist-train-infer}, line 9) output by
  Train{\fnn}\hc{FLDP}~$(\{S_t, t \in [\tau]\}, m, s,
  \rho, \gamma, {\tt IS\_DP}, \epsilon, T)$ with random seed $R$ in Algorithm~\ref{alg:dist-train-infer} is equal
  to the \fnn $\{\bw_l, l \in [L]\}$ (Alg.~\ref{alg:train-infer}, line
  8) output by Train\fnn~$(S, m, s, \rho, c, R)$ subroutine in
  Algorithm~\ref{alg:train-infer} with the pooled training set $S = \cup_{t \in
    [\tau]} S_t$.
\end{theorem}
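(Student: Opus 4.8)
The plan is to reduce both the federated and the pooled trainings to the same closed-form expression for each per-class \fbf, and then observe that they coincide entry-wise. The central structural fact I would establish first is a closed form for the output of the Train\fnn subroutine: for any dataset $S'$ and the lifting matrix $\bM$ sampled from seed $R$, after processing $S'$ the per-class \fbf satisfies $\bw_l[i] = \gamma^{c_l[i]}$ for every $l \in [L]$ and $i \in [m]$, where $c_l[i] = |\{(\bx,y) \in S' : y = l,\ \Bh[i] = 1\}|$ with $\Bh = \Gamma_\rho(\bM\bx)$ is the number of class-$l$ examples whose \fh fires on coordinate $i$. This follows by a short induction on $|S'|$: $\bw_l$ starts at $\mathbf{1}_m$, and line 6 multiplies $\bw_l[i]$ by $\gamma$ exactly once for each processed class-$l$ example whose hash has a $1$ in position $i$, leaving all other entries untouched. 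Because scalar multiplication is commutative and associative, the resulting \fbf depends only on the counts $c_l[i]$ and not on the order in which the examples are processed.

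Next I would pin down the ingredient that makes the counts comparable across parties: since the seed $R$ is broadcast to every $V_t$ (line 2) and $\bM$ is sampled deterministically from $R$, all parties use the identical lifting matrix $\bM$. Hence for any point $\bx$ the \fh $\Gamma_\rho(\bM\bx)$ is the same regardless of which party computes it. Applying the closed form on each chunk gives $\bw_l^t[i] = \gamma^{c_l^t[i]}$, with $c_l^t[i]$ the class-$l$ count restricted to $S_t$. Since the chunks partition the pooled set, $S = \bigcup_{t \in [\tau]} S_t$, and the hashes agree across parties, the global counts are additive: $c_l[i] = \sum_{t \in [\tau]} c_l^t[i]$ for every $l$ and $i$.

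Finally I would evaluate the all-reduce of line 9. For $\gamma \in (0,1)$ each entry $\bw_l^t[i] = \gamma^{c_l^t[i]} > 0$, so $\log_\gamma \bw_l^t[i] = c_l^t[i]$ recovers exactly the per-party count. Substituting into the all-reduce yields
\[
  \hat{\bw}_l[i] = \gamma^{\sum_{t \in [\tau]} \log_\gamma \bw_l^t[i]} = \gamma^{\sum_{t \in [\tau]} c_l^t[i]} = \gamma^{c_l[i]},
\]
which, by the closed form applied to the pooled $S$, is precisely the entry $\bw_l[i]$ produced by Train\fnn on $S$ with the same seed $R$. As this holds for every $l \in [L]$ and $i \in [m]$, we conclude $\hat{\bw}_l = \bw_l$ for all $l$, establishing exact parity.

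I do not expect a deep obstacle here; the argument is essentially bookkeeping, and the work is front-loaded into the inductive closed-form lemma. The two points that require genuine care are (i) verifying that the shared seed forces an identical $\bM$ across parties, since otherwise the per-coordinate counts would not be meaningfully summable, and (ii) the observation that the all-reduce is specifically engineered so that $\log_\gamma$ inverts the multiplicative decay and converts the per-party aggregation into an additive count aggregation---this is exactly the ``specialized'' structure of the all-reduce that the theorem relies on. I would also flag the boundary case $\gamma = 0$: there $\log_\gamma$ is undefined, so the parity statement is understood for $\gamma \in (0,1)$, with the binary $\gamma = 0$ regime handled by aggregating the counts directly (equivalently, \orop-ing the binarized filters) rather than through $\log_\gamma$.
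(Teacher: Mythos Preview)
Your proposal is correct and follows essentially the same approach as the paper: both arguments derive the closed form $\bw_l[i]=\gamma^{c_l[i]}$, invert the decay via $\log_\gamma$ in the all-reduce to recover per-party counts, and use additivity of counts over the partition $S=\cup_t S_t$ to conclude entry-wise equality. You are slightly more explicit than the paper in two places---the role of the shared seed in forcing a common $\bM$ and the $\gamma=0$ boundary case---both of which are worthwhile clarifications the paper's proof leaves implicit.
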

This implies that the {\fnn}FL training  (i) {\em does not incur any approximation}, {\bf and}
(ii) {\em does not require any original training data to leave their respective parties},
and these aggregated per-class {\fbf}s are now available
on every party $V_t$ and used to (iii) {\em perform inference on test points
on each party with no communication to other parties} using the
Infer\fnn subroutine in Algorithm~\ref{alg:train-infer}. The unique
capabilities are enabled by the learning dynamics of the
\fbf in \fnn.
\begin{remark}[Agnostic to non-IID-ness of per-party data] \label{rem:non-iid-agnostic}
Theorem~\ref{thm:parity-1} implies that the proposed {\fnn}FL is completely agnostic to the non-IID-ness of the data across parties.
The proposed scheme approximates the ideal \knnc (which has unrestricted access to data from all the parties) {\bf regardless of the non-IID-ness  of the per-party data}. 
\end{remark}
The computational complexities of {\fnn}FL training are as follows:
\begin{lemma}[{\fnn}FL training] \label{thm:alg3-train}
Given the setup in Theorem~\ref{thm:parity-1} 
with $|S_t| = n_t$, Train{\fnn}\hc{FLDP} (Alg.~\ref{alg:dist-train-infer}) takes $O(m \cdot \max\{s, L\})$ memory, with (a) $O\left(n_t  m
\cdot \max\left\{ s, \log \rho, \nicefrac{L}{n_t} \log \tau \right\}\right)$ time per-party and $O(mL \tau)$ communication with DP disabled ({\tt IS\_DP}=false), and (b) $O\left(n_t  m
\cdot \max\left\{ s, \log \rho, \nicefrac{LT}{n_t}\right\} + T \log \tau \right)$ time per-party and $O(T \tau)$ communication with DP enabled.
\end{lemma}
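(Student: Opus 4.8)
The plan is to split the running time and memory of the federated trainer (Algorithm~\ref{alg:dist-train-infer}) into three additive pieces and bound each one separately: (i) the per-party call to Train\fnn{} on the local chunk $S_t$ (line 4); (ii) the optional DP subroutine applied to the local {\fbf}s (lines 5--6); and (iii) the all-reduce that aggregates the per-class {\fbf}s across the $\tau$ parties (line 9). For the per-party time I would then take the $\max$ of these contributions (since big-$O$ of a sum of a fixed number of terms equals big-$O$ of their max), and for communication I would sum the (only) contribution from the all-reduce. Piece (i) is immediate from Lemma~\ref{thm:alg1-train}: with $|S_t|=n_t$ it costs $O(n_t m\cdot\max\{s,\log\rho\})$ time and $O(m\cdot\max\{s,L\})$ memory, the latter covering the shared sparse matrix $\bM$ (regenerated locally from the broadcast seed, hence never communicated) together with the $L$ local {\fbf}s.

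For piece (ii) I would read off the cost of the DP subroutine line by line. Forming the concatenated vector $\bw$, its element-wise $\log$ in $\bc$, and the final exponentiation each touch $m\times L$ coordinates, costing $O(mL)$. The loop runs $T$ times, and each iteration samples one index from the exponential mechanism over the $m\times L$ entries of $\bc$; computing or refreshing that distribution costs $O(mL)$ per draw, for $O(mLT)$ overall, which dominates. Writing this as $n_t m\cdot\nicefrac{LT}{n_t}$ shows it slots cleanly into the $\max\{s,\log\rho,\nicefrac{LT}{n_t}\}$ term of part~(b).

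The crux is piece (iii), and here the DP/non-DP split matters. I would model the all-reduce as a logarithmic-depth (tree) reduction over $\tau$ parties of the quantities $\log_\gamma\bw_l^t[i]$, followed by the element-wise re-exponentiation $\hat{\bw}_l[i]=\gamma^{\sum_t\log_\gamma\bw_l^t[i]}$; each of the $O(\log\tau)$ rounds processes a vector whose support size I call $V$, giving $O(V\log\tau)$ time per party and $O(V\tau)$ total communication. With DP disabled every {\fbf} is dense, so $V=mL$, yielding $O(mL\log\tau)=O(n_t m\cdot\nicefrac{L}{n_t}\log\tau)$ time and $O(mL\tau)$ communication, matching part~(a). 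The key observation for part~(b) is that after the DP subroutine the concatenated FBF equals $\gamma^{\bc[i]}$ with $\bc[i]=0$ (hence the entry equals $\gamma^0=1$) for all but the $T$ selected indices; since a coordinate equal to $1$ contributes $\log_\gamma 1=0$ to the sum and can be skipped, each party's stack has support $V=T$, giving the additive $O(T\log\tau)$ time and $O(T\tau)$ communication of part~(b). I would also note that the aggregated support is a subset of $[mL]$, so all reduction buffers fit within the $O(m\cdot\max\{s,L\})$ memory budget already claimed.

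The main obstacle I anticipate is justifying the sparse all-reduce bound in part~(b) precisely: one must argue both that the DP output genuinely has only $T$ non-unit coordinates (so only $O(T)$ data per party is nonzero) and that the chosen aggregation primitive realizes the $O(T\log\tau)$/$O(T\tau)$ cost without the support of the partial sums blowing up --- i.e., that summing sparse $\log_\gamma$-vectors keeps the per-round work proportional to $T$ rather than to the size of the accumulated union. Everything else (the per-party Train\fnn{} bound, the $O(mLT)$ exponential-mechanism cost, and the dense all-reduce) is routine accounting once the additive decomposition and the $\gamma^{\log_\gamma(\cdot)}$ identity are in place.
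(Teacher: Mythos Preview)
Your decomposition and accounting match the paper's proof essentially step for step: per-party Train\fnn{} via Lemma~\ref{thm:alg1-train}, the $O(mLT)$ cost of the DP subroutine dominated by the exponential-mechanism sampling, and a tree all-reduce with payload $mL$ (non-DP) or $T$ (DP) giving the stated time and communication; the seed broadcast (line~2) you omit is $O(\log\tau)$ time / $O(\tau)$ communication and does not change the bounds. The subtlety you flag about intermediate supports growing in the sparse all-reduce is not addressed in the paper either --- it simply asserts the $O(T\log\tau)$ / $O(T\tau)$ costs --- so your proposal is at least as complete as the target proof.
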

The following result establishes the DP property of Train{\fnn}FLDP, which prevents leakage between parties during the training procedure. The proof leverages the exponential mechanism.
\begin{theorem}\label{th:dp_proof}
With the DP module enable ({\tt IS\_DP}=true), Train{\fnn}\hc{FLDP} is
$(\epsilon,0)$ differentially private.
\end{theorem}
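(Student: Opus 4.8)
The plan is to first establish the privacy of the \textbf{DP} subroutine in isolation and then lift it to the full federated procedure by composition. I read the \textbf{DP} subroutine as an instance of the iterative ``peeling'' exponential mechanism: the count vector $\bc$ with $\bc[i] = \log_\gamma \bw[i]$ records, for each of the $m \times L$ coordinates, how many training examples decayed that entry, and the subroutine repeatedly (i) selects a high-count coordinate through the exponential mechanism (line 16) and (ii) releases a noised version of that count through the Laplace mechanism (line 17). The crucial preliminary step is a sensitivity calculation: adding or removing a single training example $(\bx, y)$ decays exactly the $\rho$ coordinates of $\bw_y$ picked out by the nonzero bits of its \fh $\Bh$, each by one multiplicative factor $\gamma$, so every count $\bc[i]$ changes by at most $1$ under a neighboring dataset. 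Hence both the score function used for selection and the quantity perturbed by Laplace noise have sensitivity $\Delta = 1$.

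With $\Delta = 1$ in hand, I would analyze a single round $j \in [T]$ using the budget $\epsilon'$ actually passed into \textbf{DP}. Matching the sampling law $\pr[i_j] \propto \exp(\epsilon' \bc[i_j]/4T)$ against the canonical exponential-mechanism density $\propto \exp(\epsilon_{EM}\, q(i) / 2\Delta)$ gives $\epsilon_{EM} = \epsilon'/2T$, so the selection step is $(\epsilon'/2T, 0)$-DP. Likewise, adding $\mathsf{Laplace}(2T/\epsilon')$ noise to a count of sensitivity $1$ is the Laplace mechanism with scale $2T/\epsilon'$, hence $(\epsilon'/2T, 0)$-DP; the subsequent truncation $\max\{\cdot,0\}$ is data-independent post-processing and is free. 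Releasing both the index and its noisy magnitude in round $j$ therefore costs $\epsilon'/2T + \epsilon'/2T = \epsilon'/T$ by basic composition.

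Next I would compose across the $T$ rounds. Because each round conditions on the outputs (selected indices and noisy counts) of earlier rounds, this is adaptive sequential composition, under which the $T$ rounds, each $(\epsilon'/T, 0)$-DP, compose to $(\epsilon', 0)$-DP. The final operations of the subroutine --- zeroing the unselected coordinates (line 21), exponentiating to recover $\tilde{\bw}[i] = \gamma^{\bc[i]}$ (line 22), and reshaping back into per-class filters --- act only on the already-private released indices and counts, so by the post-processing property the \textbf{DP} subroutine invoked with budget $\epsilon'$ is $(\epsilon', 0)$-DP. Finally, each party $V_t$ invokes \textbf{DP} with budget $\epsilon' = \epsilon/\tau$ on its own chunk $S_t$, yielding a per-party $(\epsilon/\tau, 0)$-DP filter; since the all-reduce in line 9 is a deterministic post-processing of the $\tau$ privatized filters, composing the per-party guarantees over the $\tau$ parties delivers the claimed $(\epsilon, 0)$-DP for Train{\fnn}\hc{FLDP}.

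The step I expect to be the main obstacle is making the adaptive ``peeling'' composition airtight. I must confirm that modifying $\bc[i_j]$ after selection (line 17) does not violate the exponential mechanism's assumptions --- i.e.\ that each round is a legitimate query on the fixed per-party database whose only dependence on earlier rounds is through their released outputs --- and that the selection and Laplace steps are charged separately rather than double-counted. Pinning down $\Delta = 1$ together with the exponential-mechanism normalization is also where an off-by-factor error would most easily creep in (the exponent constant $4T$ must be reconciled with the $2\Delta$ in the canonical density to yield $\epsilon'/2T$ per selection), so I would verify these constants carefully before asserting the per-round $\epsilon'/T$ budget and the final cross-party composition.
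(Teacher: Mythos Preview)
Your proposal is correct and takes essentially the same approach as the paper: bound the per-coordinate sensitivity of the count vector by $1$, charge $\epsilon_0 = \epsilon/(2T\tau)$ each for the exponential-mechanism selection and the Laplace release in every round, then compose over the $T$ rounds and the $\tau$ parties to reach $(\epsilon,0)$. You are, if anything, more explicit than the paper about the adaptive-composition and post-processing steps (the paper simply invokes ``standard properties'' and ``the composition theorem''), and the subtlety you flag about the in-place modification of $\bc[i_j]$ is real but benign for exactly the reason you sketch: conditioned on the previously released noisy counts, the next-round score is a $1$-Lipschitz function of the original counts, so its sensitivity remains $1$.
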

\paragraph{Communication setup.}
The  Train{\fnn}FLDP algorithm is presented
here in a peer-to-peer communication setup. However,
it will easily transfer to a centralized
setup with a ``global aggregator'' that all parties
communicate to. In that case, for {\fnn}FL training, the aggregator (i) generates and broadcasts the
seed, and (ii) gathers \& computes $\{\hat{\bw}_l, l \in [L]\}$, and (iii) broadcasts them to all parties. 
\paragraph{Effect of timed-out parties (``Stragglers'') in FL.}
The ability to be robust to stragglers is of critical importance in FL. Stragglers play an important role in iterative algorithms with multiple rounds of communication.
In Train{\fnn}FLDP, there is only {\em a single round of communication} in the training scheme (Algorithm~\ref{alg:dist-train-infer}, line 9), we do not anticipate there to be any stragglers. For inference, {\em all computations are local to each party}, and hence, there is no notion of stragglers.
We leave further study of stragglers for future work.
%
%
\section{Empirical Evaluation}\label{sec:emp}
In this section, we evaluate the empirical performance of
\fnn. First, we compare \fnn to \nnc to validate its ability to approximate \nnc. Then, we demonstrate the scaling of {\fnn}FL training on data distributed among multiple parties. Finally, we present the privacy-performance tradeoff of {\fnn}FL. Various details and additional experiments are presented in Appendix~\ref{asec:emp-eval}. The implementation details and compute resources used are described in
Appendix~\ref{asec:emp-eval:details} and relevant code is available at \url{https://github.com/rithram/flynn}.
\paragraph{Datasets.}
For the evaluation of \fnn, we consider three groups of datasets:
\begin{itemize}
\item We consider binary and continuous {\bf synthetic data} of varying sizes,
designed to favor local classifiers like \nnc~\citep{guyon2003design}. See Appendix~\ref{asec:emp-eval:syndata} for further details.
\item We consider 70 classification datasets from OpenML~\citep{van2013openml}
to evaluate the performance of \fnn on real datasets, {\bf thoroughly} comparing \fnn to \nnc. See Appendix~\ref{asec:emp-eval:openml} for details.
\item We consider high dimensional vision datasets MNIST~\citep{lecun1995MNIST}, Fashion-MNIST~\citep{xiao2017fashion} and CIFAR~\citep{krizhevsky2009learning} from the Tensorflow package~\citep{abadi2016tensorflow}
for evaluating the scaling of {\fnn}FL training when the data is distributed between multiple parties. See [Appendix~\ref{asec:emp-eval:scaling} for details.
\end{itemize}

\paragraph{Baselines and ablation.}
We compare our proposed \fnn to two baselines:
\begin{itemize}
\item {\bf \knnc:} This is the primary baseline. We tune over the
  neighborhood size $k \in [1,64]$. We also specifically consider
  \onennc ($k = 1)$.
\item {\bf \sbfc:} To ablate the effect of the high level of sparsity in \fh, we utilize the binary \sh~\citep{charikar2002similarity} based locality sensitive bloom filter for each class in place of \fbf to get \sh Bloom Filter classifier (\sbfc). See Appendix~\ref{asec:emp-eval:sbfc} for further details.
\end{itemize}
\paragraph{\fnn hyper-parameter search.}
For a dataset with $d$ dimensions, we tune across $60$ \fnn
hyper-parameter settings in the following ranges: $m \in [2d, 2048d]$, $s \in [2, \lfloor 0.5d \rfloor]$, $\rho \in [8, 256]$, and $\gamma \in [0, 0.8]$. We use this hyper-parameter space for all experiments, except for the vision sets, where we use $m \in [2d, 1024d]$.
We present various experiments and detailed discussions on the hyper-parameter dependence in Appendix~\ref{asec:emp-eval:hpdep}.
To summarize the dependence, (i)~increasing $m$ improves \fnn performance and can be selected to be as large as computationally feasible, (ii)~when $d$ is large enough ($\geq 20$), the \fnn performance is somewhat agnostic to the choice of $s$ and any small value ($s \sim 0.05d$) suffices, (iii)~increasing $\rho$ improves \fnn performance up to a point after which it can hurt performance unless $m$ is increased as well since it reduces the sparsity of \fh, (iv)~increasing $\gamma$ from $0$ to $>0$ significantly improves \fnn performance, but otherwise the performance is quite robust to its precise choice.
%
\paragraph{Evaluation metric to compare across datasets.}
To obtain statistical significance and error bars for performance across different datasets, we compute the ``normalized accuracy'' for a method on a dataset as $(1 - \nicefrac{a}{a_k})$ where $a_k$ is the best tuned 10-fold cross-validated accuracy of \knnc on this dataset
and $a$ is the best tuned 10-fold cross-validated accuracy obtained by the method on this dataset.
Thus \knnc has a normalized accuracy of $0$ for all datasets; negative values denote improvement over \knnc. This ``normalization'' allows comparison of the aggregate performance of different methods across different datasets with distinct best achievable accuracies.
\paragraph{Synthetic data.}
\begin{table}[tb]
\caption{Comparison of \fnn with \nnc and
  \sbfc on synthetic data. We report normalized accuracy aggregated over 30 random synthetic datasets. Normalized accuracy for \knnc is zero hence elided from the results.
}
\label{tab:syn-data}
\begin{center}
{\scriptsize
\begin{tabular}{cccccc}
\toprule
$\mathcal X$      & $n$  & $d$  & \onennc  & \sbfc & \fnn \\
\midrule
$\HC^d$ & $10^3$ & $50$  & $0.11 \pm 0.05$ & $0.18 \pm 0.04$ & $-0.05 \pm 0.02$ \\
$\HC^d$ & $10^4$ & $50$  & $0.18 \pm 0.02$ & $0.51 \pm 0.02$ & $-0.03 \pm 0.01$ \\
$\HC^d$ & $10^3$ & $100$ & $0.07 \pm 0.03$ & $0.58 \pm 0.03$ & $-0.04 \pm 0.02$ \\
$\R^d$  & $10^3$ & $50$  & $0.09 \pm 0.03$ & $0.68 \pm 0.01$ & $-0.07 \pm 0.02$ \\
$\R^d$  & $10^4$ & $50$  & $0.11 \pm 0.00$ & $0.78 \pm 0.00$ & $0.07 \pm 0.02$  \\
$\R^d$  & $10^3$ & $100$ & $0.11 \pm 0.03$ & $0.66 \pm 0.02$ & $-0.05 \pm 0.03$ \\
\bottomrule
\end{tabular}
}
\end{center}
\end{table}
We first consider synthetic data designed for strong \knnc
performance. We generate data for 5 classes with 3 clusters per class,
and points in the same cluster belong to the same class implying that
a neighborhood based classifier will perform well. However, the
classes are not linearly separable. We select such a set
to demonstrate that the proposed \fnn is able to encode multiple
separate modes of a class within a single \fbf while providing
enough separation between the per-class {\fbf}s for high predictive 
performance.
We consider binary synthetic data in $\HC^d$ and synthetic data in general $\Real^d$. We consider $d = 50, 100$ and $n = 10^3, 10^4$. For each configuration, we create 30 datasets. The performances of all baselines are presented in Table~\ref{tab:syn-data}.
The results indicate that \fnn is able to match \knnc performance
significantly better than all other baselines, including \onennc, by being closest to zero (\fnn appears to improve upon \nnc but the improvements are not significant overall). \fnn significantly outperforms \sbfc, highlighting the need for {\em sparse high dimensional hashes} to summarize multi-modal distributions while avoiding overlap between per-class {\fbf}s. The small standard errors indicate the stability of the relative performances across different datasets.
\paragraph{OpenML data.}
\begin{figure}[t]
  \centering
    \includegraphics[width=0.4\textwidth]{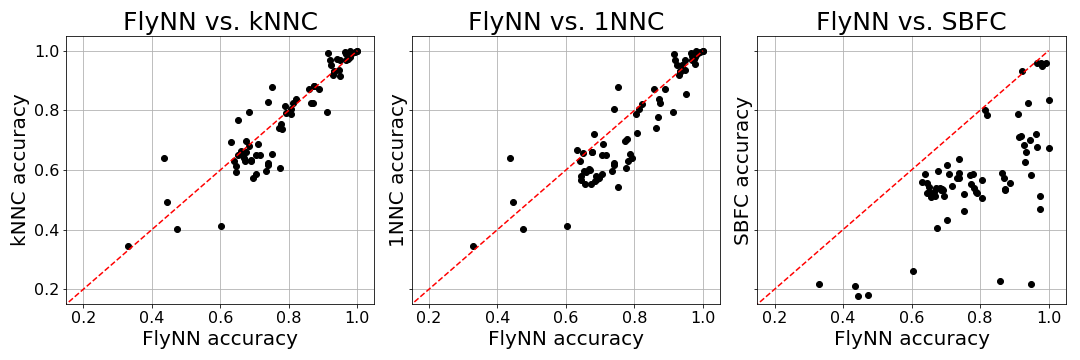}
  \caption{Performance of baselines relative to \fnn on {\em OpenML} datasets. The scatter plots compare the best tuned \fnn accuracy against that of \knnc, \onennc and \sbfc with a point for each dataset, and the red dashed diagonal marking match to \fnn accuracy.}
  \label{fig:flynn-base}
\end{figure}
\begin{table}[tb]
\caption{Evaluating \fnn on OpenML datasets with (i)~Fraction of datasets \fnn outperforms baselines, (ii)~Number of datasets
  \fnn has wins(W)/ties(T)/losses(L) over baselines, (iii)~Median improvement in normalized accuracy by \fnn over baseline, (iv)~$p$-values for the paired 2-sided t-test (TT), (v)~$p$-values for the 2-sided Wilcoxon signed rank test
  (WSRT).}
\label{tab:openml-ind-comp}
\begin{center}
{\scriptsize%
\begin{tabular}{lccccc}
\toprule
\textsc{Method} & (i) {\sc Frac.} & (ii) {\sc W/T/L} & (iii) {\sc Imp.} & (iv) {\sc TT} & (v) {\sc WSRT}  \\
\midrule
\knnc           & 0.55            & 39/2/30    & 0.35\% & 5.30E-2 & 7.63E-2  \\
\onennc         & 0.66            & 47/2/22    & 2.36\% & 1.55E-5 & 2.81E-5  \\
\sbfc           & 0.99            & 70/0/1     & 25.4\% & $<$1E-8 & $<$1E-8  \\
\bottomrule
\end{tabular}
}%
\end{center}
\end{table}
We consider 70 classification (binary and multi-class) datasets from OpenML with $d$ numerical columns and $n$ samples; $d \in [20, 1000], n \in [1000, 20000]$. Unlike the synthetic sets, these datasets do not guarantee strong \knnc performance. The results are presented in Figure~\ref{fig:flynn-base}. 
In Table~\ref{tab:openml-ind-comp}, the normalized accuracy of all baselines are compared to \fnn with paired two-sided $t$-tests (TT) and two-sided Wilcoxon signed rank test (WSRT).
In Figure~\ref{fig:flynn-base}, we can see on the left figure (\knnc vs \fnn) that most points are near the diagonal (implying \knnc and \fnn parity) with some under (better \fnn accuracy) and some over (worse \fnn accuracy). With \onennc in the center plot of Figure~\ref{fig:flynn-base}, we see that, in most cases, \onennc either matches \fnn  or does worse (being under the diagonal) since \knnc subsumes \onennc. But the right plot for \sbfc in Figure~\ref{fig:flynn-base} indicates that \sbfc is quite unable to match \fnn (and hence \knnc). 
We quantify these behaviours in Table~\ref{tab:openml-ind-comp}. \fnn performs comparably to {\knnc} (median improvement of only 0.35\%) with $p$-values of $0.0536$ (TT) and 0.0763 (WSRT), while improving the normalized accuracy over \onennc by a median of around 2.36\% across all 70 sets ($p$-values $\sim 10^{-5}$). These results demonstrate that {\em the proposed \fnn has comparable performance to properly tuned \knnc and this behaviour is verified with a large number of datasets}. \fnn significantly outperforms \sbfc ($>25\%$ median improvement, $p$-values  $<10^{-8}$), again highlighting the value of high sparsity in the \fh on real datasets.

Methods learned through gradient-descent (such as linear models or neural networks) have been widely studied in the FL setting. However, it is hard to compare nearest-neighbor methods against gradient-descent-based methods with proper parity. We present one comparison on these OpenML datasets in Appendix~\ref{asec:emp-eval:gd-based}.
\paragraph{Scaling.}
\begin{figure}[t]
  \centering
  \includegraphics[width=0.27\textwidth]{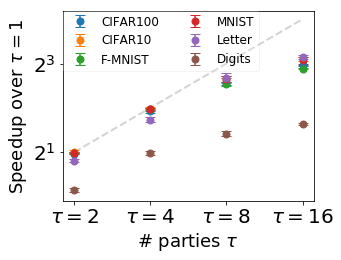}
  \caption{Scaling of {\fnn}FL training with $\tau$ parties
    over single-party training. 
    The \textcolor{darkgray}{gray} line marks linear scaling.}
  \label{fig:scaling}
\end{figure}
We evaluate the scaling of the {\fnn}FL training --
Algorithm~\ref{alg:dist-train-infer},
Train{\fnn}\hc{FLDP} -- with the number of
parties $\tau$. For fixed hyper-parameters, we average runtimes (and speedups) over 10 repetitions for each of the 6 datasets (see Appendix~\ref{asec:emp-eval:scaling}) and present the results in Figure~\ref{fig:scaling}.  The results indicate
that Train{\fnn}\hc{FLDP} scales very well for up to 8
parties for the larger datasets, and shows up to $8 \times$ speed up with $16$ parties. There is significant gain (up to $2 \times$) even for the tiny \textsc{Digits} dataset (with $<2000$ total rows), demonstrating the scalability of the \fnn training with very low communication overhead.  
\paragraph{Differential privacy.}
\begin{figure}[t]
  \centering
  \begin{subfigure}{0.22\textwidth}
    \centering
    \includegraphics[width=\textwidth]{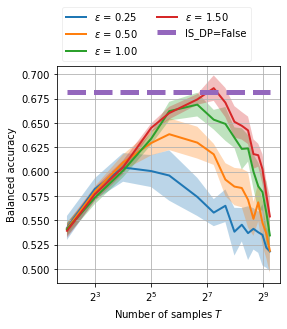}
    \caption{Effect of $T$ for different $\epsilon$.}
    \label{fig:dp-eps-T}
  \end{subfigure}
  ~
  \begin{subfigure}{0.225\textwidth}
    \centering
    \includegraphics[width=\textwidth]{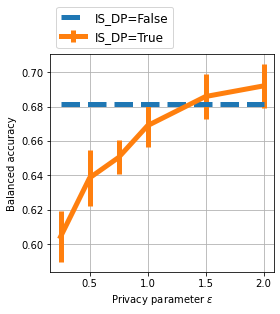}
    \caption{Dependence on $\epsilon$.}
    \label{fig:dp-eps-v-acc}
  \end{subfigure}
  \caption{The effect of the DP parameters $T$ and $\epsilon$ on the performance of {\fnn}FL (higher is better).}
  \label{fig:dp-one-hp}
\end{figure}
To study the privacy-performance tradeoff of \fnn, we again consider the previously described synthetic data in $\R^d$. For a fixed setting \fnn hyper-parameters (namely $m, s, \rho, \gamma$) and 2-party federated training ($\tau=2$), we study the effect of the privacy parameters on {\fnn}FL performance in Figure~\ref{fig:dp-one-hp}. In Figure~\ref{fig:dp-eps-T}, we see the effect of varying the number of sampled entries $T$. We observe the intuitive behaviour where, for a fixed privacy level $\epsilon$, increasing $T$ initially improves performance, but eventually hurts because of the high noise level obfuscating the \fbf entries too drastically. In Figure~\ref{fig:dp-eps-v-acc}, we report the effect of varying $\epsilon$. For each $\epsilon$, we select the $T$ corresponding to the best performance (based on Figure~\ref{fig:dp-eps-T}). This shows the expected trend of increasing performance with increasing $\epsilon$, where the DP \fnn can match the non-DP \fnn ({\tt IS\_DP}=false) with $\epsilon$ close to 1.
We present further experimental details and results for different dataset sizes and different \fnn hyper-parameter configurations in Appendix~\ref{asec:emp-eval:dp}.
\section{Related work}\label{sec:related}
The $k$ nearest neighbor classification method (\knnc)  is a conceptually simple, non-parametric and
popular classification method which defers its computational burden to the prediction stage. The consistency properties of \knnc are well studied \cite{fix1951_consistency,cover1967_nn,devroye1994_consistency,chaudhuri2014_rate}. 
Traditional \knnc assumes training data is stored centrally in a single machine, but such  central processing and storing assumptions become unrealistic
in the big data era.
An effective way to overcome this issue is to 
distribute the data
across multiple machines and  use
specific
distributed computing environment such as Hadoop or Spark
with MapReduce paradigm  \cite{anchalia2014mapreduce_knn, mallio2015_mapreduce_knn,gonzalez-lopez2018_disttrubuted_knn}.
\citet{zhang2020_disttrubuted_knn} proposed a \knnc algorithm based on the concept of distributed storage and computing for processing large datasets in cyber-physical systems where $k$-nearest neighbor search is performed locally using a kd-tree. \citet{qiao2020_largescale_knn} analyzed a distributed \knnc in which data are divided into multiple smaller subsamples, \knnc predictions are made locally in each subsamples and these local predictions are combined via majority voting to make the final prediction.
Securely computing \knnc is another closely related field when data is stored in different local devices. Majority of the frameworks that ensure privacy for \knnc often use some sort of secure multi-party computation (SMC) protocols \cite{zhan2005_privacy,xiong2006_private,qi2008_privacy,schoppmann2020_secure,shaul2020_secure,chen2020_secure_approximate}.

The federated learning framework involves training statistical models over remote devices 
while keeping data localized. Such a framework has recently received significant attention with the growth of the storage and computational capabilities of the remote devices within distributed networks especially because learning in such setting differs significantly from tradition distributed environment requiring advances in areas such as security and privacy, large-scale machine learning and distributed optimization.
Excellent survey and research questions on this new field can be found in \cite{li2020_FL_survey,kairouz2021_FL_survey}. 
In federated learning,
the parameters of a global model is learned in rounds, where in each round a central server sends the current state of the global algorithm (model parameters) to all the parties, each party makes local updates and sends the updates back to the central server \cite{mcmahan2017_fedaverage}. 

Current distributed \knnc schemes do not directly translate to the federated learning setting since the test point needs to be transmitted to all parties.
In most secure \knnc settings considered in the literature, the goal is to keep the training data secure from the party making the test query~\cite{qi2008_privacy,shaul2018secure,wu2019_privacy_knn} and it is not clear how those approaches extend to the multi-party federated setting where the per-party data (train or test) should remain localized.
Of particular relevance is \citet{schoppmann2020_secure}\footnote{A previous version of \citet{schoppmann2020_secure} was titled ``Private Nearest Neighbors Classification in Federated Databases'' (\url{https://eprint.iacr.org/eprint-bin/versions.pl?entry=2018/289}) but has since been changed as the focus of the paper was shifted.} which proposed a scheme to compute a secure inner-product between any test point and all training points (distributed across parties) and then perform a secure top-$k$ protocol
to perform \knnc.
This procedure explicitly computes the neighbors for a test point, which involves $n$ secure similarity computations for each test point (on top of the secure top-$k$ protocol).
Both these steps require significant communication at inference time. In contrast, our proposed scheme do not require any explicit computation of the nearest-neighbors and hence requires no top-$k$ selection (secure or otherwise). In fact the inference requires {\em no communication} and the training can be made DP.
Moreover, this paper focuses on document classification and leverages the significantly sparse feature representations of training examples. The high sparsity allows the use of {\em correlated permutations} to compute inner-products. However, the critical use of correlated permutations of the non-sparse indices augmented with padding does not translate to general dense data -- in the absence of sparsity, the required correlated permutations would be very large and require multiple rounds of computation for a single similarity computation.
Hence it is not clear if these techniques translate to the general \knnc.
\section{Limitations \& Future Work}\label{sec:limit}
A high-level limitation of our work is that we are not presenting a state-of-the-art result, but rather demonstrating how naturally occurring algorithms (\fh and \fbf) can expand the scope of a canonical ML scheme (\nnc) to more learning environments (FL). Our motivations are not to achieve state-of-the-art, but rather to explore and understand the novel unique capabilities of this neurobiological motif.

Another limitation is that the current theoretical connection between \fnn and \knnc requires assumptions on the class margins and on the distribution of the data (the test point is from a permutation invariant distribution). This limits the scope of the theoretical result though we try to verify the theory with a large number of synthetic and real datasets. We plan to remove such assumptions in our future work.
\paragraph{Acknowledgement.} 
KS gratefully acknowledges funding from the ``NSF AI Institute for Foundations of Machine Learning (IFML)'' (FAIN: 2019844).

{
\bibliography{references}
}
\clearpage
\onecolumn
\appendix
\section{Analyses of Algorithm~\ref{alg:train-infer} complexities} \label{asec:alg1-proofs}

\subsection{Proof for Lemma~\ref{thm:alg1-train}}

\begin{proof}
We can summarize the complexities for the different operations in
Train\fnn (Algorithm~\ref{alg:train-infer}) as follows:
\begin{itemize}
\item Line 2 takes $O(ms)$ time and memory to generate the random
  binary lifting matrix $\bM$.
\item Line 3 takes $O(mL)$ time and memory to initialize the per-class
  {\fbf}s $\bw_l, l \in [L]$.
\item Each \fh in line 5 takes $O(m(s + \log \rho))$ time and $O(m)$
  memory.
\item \fbf $\bw_y$ update in line 6 takes $O(\rho)$ time since
  $\gamma$ is multiplied to only $\rho$ entries in $\bw_y$ since $\Bh$
  has only $\rho$ non-zero entries.
\item Hence the loop 4-7 takes time $O(n (ms + m \log \rho + \rho))$
  and maximum $O(m)$ additional memory.
\item Given $\rho \ll m$ and $L \ll n$, the total runtime is given by
  $O(n \cdot m \cdot \max\{s, \log \rho\})$ time and $O(m \cdot
  \max\{s, L\})$ memory.
\end{itemize}
This proves the statement of the claim.
\end{proof}

\subsection{Proof for Lemma~\ref{thm:alg1-infer}}

\begin{proof}
We can summarize the complexities for the different operations in
Infer\fnn (Algorithm~\ref{alg:train-infer}) as follows:
\begin{itemize}
\item The \fh operation in line 11 takes time $O(m(s + \log \rho))$
  and $O(m)$ memory.
\item The operation in line 12 takes time $O(L \rho)$ since each
  $\bw_l^\top \Bh$ takes time $O(\rho)$ since $\Bh$ only has $\rho$
  nonzero entries and $O(L)$ additional memory.
\item This leads to an overall runtime of $O(m \cdot \max\{s, \log
  \rho, (\rho L / m)\})$ and memory overhead of $O(\max\{m, L\})$.
\end{itemize}
This proves the statement of the claim.
\end{proof}

\section{Analysis of Algorithm~\ref{alg:train-infer} learning theoretic properties} \label{asec:alg1-proofs_learning-theory}

\subsection{Preliminaries \& notations}

We denote a single row of a lifting matrix $\bM$ by
$\theta\in\{0,1\}^d$ drawn i.i.d. from $Q$, the uniform distribution
over all vectors in $\{0,1\}^d$ with exactly $s$ ones, satisfying
$s\ll d$.
We use an alternate formulation of the winner-take-all strategy as
suggested in~\citet{dasgupta2018neural}, where for any $\bx\in\R^d,
\tau_{\bx}$ is a threshold that sets largest $\rho$ entries of $\bM\bx$ to one
(and the rest to zero) in expectation.
Specifically, for a given $\bx\in\R^d$ and for any fraction $0<f<1$, we
define $\tau_{\bx}(f)$ to be the top $f$-fractile value of the
distribution $\theta^{\top}\bx$, where $\theta\sim Q$:
\begin{equation}
    \tau_{\bx}(f)=\sup\{v: \pr_{\theta\sim Q}(\theta^{\top}\bx\geq v)\geq f\}
\end{equation}
We note that for any $0<f<1$, $\pr_{\theta\sim Q}(\theta^{\top}\bx\geq
\tau_{\bx}(f))\approx f$, where the approximation arises from possible
discretization issues. For convenience, henceforth we will assume that
this is an equality:
\begin{equation}
    \pr_{\theta\sim Q}(\theta^{\top}\bx\geq \tau_{\bx}(f))= f
\end{equation}
For any two $\bx,\bx'\in\R^d$, we define $q(\bx,\bx')=\pr_{\theta\sim Q}
\left(\theta^{\top}\bx'\geq \tau_{\bx'}\left( \rho / m \right) ~|~
\theta^{\top}\bx \geq\tau_{\bx}\left( \rho / m \right) \right)$.
This can be interpreted as follows -- with $\Bh=h(\bx), \Bh'=h(\bx')$ as the
$\fh$es of $\bx$ and $\bx'$, respectively, $q(\bx,\bx')$ is the probability
that $\Bh'[j]=1$ given that $\Bh[j]=1$, for any specific $j \in [m]$.

We begin by analyzing the binary classification performance of \fnn
trained on a training set $S=\{(\bx_i,y_i)\}_{i=1}^{n_0+n_1}\subset
\R \times \{0,1\}$, where $S=S^{1}\cup S^{0}$, $S^{0}$ is a
subset of $S$ having label 0, and $S^{1}$ is a subset of $S$ having
label 1, satisfying $|S^{0}|=n_0$, $|S^{1}|=n_1$ and
$n=n_0+n_1$. For appropriate choice of $m$, let $\bw_{0},
\bw_{1}\in\{0,1\}^m$ be the $\fbf$s constructed using $S^{0}$ and $S_1$
respectively.

\subsection{Connection to \knnc}
We first present the following lemmas which will be required to prove Theorem~\ref{th:knnc}.
\begin{lemma}[Expected novelty response \cite{dasgupta2018neural}]\label{alem:expected_novelty}
Suppose that inputs $\bx_1,\ldots,\bx_n\in\R^d$ are first presented with $\bx_i\rightarrow \bm{y}_i\rightarrow\bh_i$, where $\bm{y}_i=\bM\bx_i, \bh_i=h(\bx_i)$, and $\bw$ is the \fbf constructed using $\bx_1,\ldots,\bx_n$. Then a subsequent input $\bx$ is presented with $\bx\rightarrow\bm{y}\rightarrow\bh$, where $\bm{y}=\bM\bx, \bh=h(\bx)$.

\noindent (a) The $m$ random vectors $(\bm{y}_1[j],\ldots,\bm{y}_n[j],\bh_1[j],\ldots,\bh_n[j],\bm{y}[j],\bh[j],\bw[j]), 1\leq j\leq m$, (over the random choice choice of $\bM$) are independent and identically distributed.

\noindent (b) The novelty response to $\bx$ has expected value
$$\mu=\E(\bw^{\top}h(\bx)/\rho)=\pr_{\theta\sim Q}\left(\theta^{\top}\bx_1<\tau_{\bx_1},\ldots,\theta^{\top}\bx_{n}<\tau_{\bx_n} ~|~ \theta^{\top}\bx\geq \tau_{\bx}\right)$$
\end{lemma}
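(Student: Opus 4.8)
The plan is to prove parts (a) and (b) in sequence, leaning throughout on the alternate threshold reformulation of \fh developed above, in which $\bh_i[j] = \mathbf{1}\{\theta_j^\top \bx_i \geq \tau_{\bx_i}\}$ and $\bh[j] = \mathbf{1}\{\theta_j^\top \bx \geq \tau_{\bx}\}$, with $\tau_{\bx_i} = \tau_{\bx_i}(\rho/m)$ and $\tau_{\bx} = \tau_{\bx}(\rho/m)$ treated as fixed deterministic thresholds (given the inputs) and $\theta_j \in \{0,1\}^d$ denoting the $j$-th row of $\bM$.

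\textbf{Part (a).} First I would observe that each coordinate-$j$ quantity in the listed tuple is a function of the single row $\theta_j$ alone. Indeed $\bm{y}_i[j] = \theta_j^\top \bx_i$ and $\bm{y}[j] = \theta_j^\top \bx$ depend only on $\theta_j$; under the threshold formulation the hash bits $\bh_i[j] = \mathbf{1}\{\theta_j^\top \bx_i \geq \tau_{\bx_i}\}$ and $\bh[j] = \mathbf{1}\{\theta_j^\top \bx \geq \tau_{\bx}\}$ depend only on $\theta_j$ (the thresholds being fixed); and the \fbf entry $\bw[j]$, being a deterministic function of $\bh_1[j], \ldots, \bh_n[j]$ (namely $\bw[j] = \gamma^{\#\{i : \bh_i[j]=1\}}$), likewise depends only on $\theta_j$. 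Hence the entire tuple is a measurable function of $\theta_j$. Since the rows $\theta_1, \ldots, \theta_m$ are drawn i.i.d.\ from $Q$, applying one and the same function coordinate-wise produces $m$ i.i.d.\ tuples, which is exactly the claim.

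\textbf{Part (b).} Specializing to the binary \fbf ($\gamma = 0$) used in the analysis, so that $\bw[j] = \mathbf{1}\{\theta_j^\top \bx_1 < \tau_{\bx_1}, \ldots, \theta_j^\top \bx_n < \tau_{\bx_n}\}$, I would expand $\bw^\top h(\bx) = \sum_{j=1}^m \bw[j]\,\bh[j]$ and use linearity together with part (a) to write $\E(\bw^\top h(\bx)/\rho) = (m/\rho)\,\E(\bw[1]\bh[1])$. The product $\bw[1]\bh[1]$ is the indicator of the joint event $\{\theta^\top \bx_1 < \tau_{\bx_1}, \ldots, \theta^\top \bx_n < \tau_{\bx_n}, \theta^\top \bx \geq \tau_{\bx}\}$ for $\theta \sim Q$, so its expectation equals that joint probability. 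Factoring the joint probability as $\pr_{\theta\sim Q}(\theta^\top \bx \geq \tau_{\bx}) \cdot \pr_{\theta\sim Q}(\theta^\top \bx_1 < \tau_{\bx_1}, \ldots, \theta^\top \bx_n < \tau_{\bx_n} \mid \theta^\top \bx \geq \tau_{\bx})$ and invoking the calibration convention $\pr_{\theta\sim Q}(\theta^\top \bx \geq \tau_{\bx}(\rho/m)) = \rho/m$ cancels the prefactor $m/\rho$ and delivers exactly the stated expression for $\mu$.

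\textbf{Main obstacle.} The only delicate point is part (a): under the literal winner-take-all $\Gamma_\rho$, exactly $\rho$ of the $m$ hash bits are set to one, which couples the coordinates and destroys independence. The reduction to i.i.d.\ coordinates is therefore \emph{not} valid for the exact rank-based selection, and is made possible precisely by the threshold reformulation, where $\tau_{\bx}$ is a fixed quantity depending only on $\bx$. I would flag that this substitution introduces the small discretization approximation noted above, which is treated as an exact equality via the stated convention; once that is granted, everything else is a routine expectation computation.
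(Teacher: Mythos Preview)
The paper does not actually prove this lemma; it is stated with a citation to \cite{dasgupta2018neural} and used as an input to the subsequent analysis, so there is no in-paper proof to compare against. That said, your argument is correct and is essentially the standard one: each coordinate tuple is a deterministic function of the single row $\theta_j$ under the threshold formulation, hence i.i.d.\ across $j$, and then linearity plus the calibration $\pr_{\theta\sim Q}(\theta^\top\bx\geq\tau_{\bx}(\rho/m))=\rho/m$ gives part (b). Your remark that the literal rank-based $\Gamma_\rho$ would couple coordinates and that independence is purchased precisely by passing to the threshold surrogate is exactly the right caveat, and matches the convention the paper adopts in its preliminaries.
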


\begin{lemma}[Bounds on expected novelty response  \cite{dasgupta2018neural}]\label{alem:bound_novelty}
The expected value $\mu$ from Lemma~\ref{alem:expected_novelty} can be bounded as follows:

\noindent (a) Lower bound:  $\mu \geq 1-\sum_{i=1}^n q(\bx, \bx_i)$.

\noindent (b) Upper bound: for any $1\leq l\leq n$, $~~\mu\leq 1- q(\bx, \bx_l)$.
\end{lemma}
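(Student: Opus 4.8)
The plan is to treat the expression for $\mu$ supplied by Lemma~\ref{alem:expected_novelty}(b) as a conditional probability of an intersection of events, and then derive both bounds by elementary probability inequalities applied under that conditional measure. Concretely, I would fix the single random row $\theta \sim Q$ and define, for each $i \in [n]$, the event $A_i = \{\theta^\top \bx_i \geq \tau_{\bx_i}(\rho/m)\}$ (so that $A_i^c = \{\theta^\top \bx_i < \tau_{\bx_i}(\rho/m)\}$), together with the conditioning event $B = \{\theta^\top \bx \geq \tau_{\bx}(\rho/m)\}$. With this notation, Lemma~\ref{alem:expected_novelty}(b) reads exactly $\mu = \pr(A_1^c \cap \cdots \cap A_n^c \mid B)$, while the definition of the correlation quantity reads $q(\bx, \bx_i) = \pr(A_i \mid B)$. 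The key observation enabling everything is that $\pr(\cdot \mid B)$ is itself a genuine probability measure, so \emph{monotonicity} and \emph{Boole's inequality} transfer to it verbatim.

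For the lower bound (a), I would rewrite the conditional probability of the intersection of complements as one minus the conditional probability of a union, and then apply Boole's inequality under $\pr(\cdot \mid B)$:
\[
\mu = 1 - \pr\left(\bigcup_{i=1}^n A_i \;\middle|\; B\right) \geq 1 - \sum_{i=1}^n \pr(A_i \mid B) = 1 - \sum_{i=1}^n q(\bx, \bx_i),
\]
which is precisely the claimed inequality.

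For the upper bound (b), I would invoke monotonicity of the conditional measure: for any fixed index $l$, the intersection $\bigcap_{i=1}^n A_i^c$ is contained in the single event $A_l^c$, hence
\[
\mu = \pr\left(\bigcap_{i=1}^n A_i^c \;\middle|\; B\right) \leq \pr(A_l^c \mid B) = 1 - \pr(A_l \mid B) = 1 - q(\bx, \bx_l),
\]
and since $l$ was arbitrary this holds for every $1 \leq l \leq n$, establishing the second part.

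There is essentially no analytical obstacle here; the entire content is Boole's inequality and set-monotonicity, neither of which requires any knowledge of the joint distribution of the $A_i$ beyond their conditional marginals $q(\bx, \bx_i)$. The only point deserving care is notational consistency: one must check that the thresholds $\tau_{\bx_i}$ appearing in the unconditioned novelty expression of Lemma~\ref{alem:expected_novelty}(b) are the same top-$(\rho/m)$-fractile thresholds $\tau_{(\cdot)}(\rho/m)$ used to define $q(\bx, \bx')$, so that the events $A_i$ and the conditioning event $B$ align exactly between the two statements. This alignment holds because the \fh retains the top $\rho$ of its $m$ coordinates, i.e.\ the fraction is $\rho/m$ throughout; given it, both inequalities are immediate.
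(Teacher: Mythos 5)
Your proof is correct: the paper itself imports this lemma from \citet{dasgupta2018neural} without reproving it, and your argument---viewing $\mu$ as $\pr(\bigcap_i A_i^c \mid B)$ and applying Boole's inequality and set-monotonicity under the conditional measure, with $q(\bx,\bx_i)=\pr(A_i\mid B)$---is exactly the standard argument used in that reference. Your care about the thresholds is also well placed: both $\mu$ and $q$ use the same top-$(\rho/m)$-fractile thresholds $\tau_{(\cdot)}(\rho/m)$, and the conditioning event has probability $\rho/m>0$, so the conditional measure is well defined and nothing further is needed.
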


\begin{lemma}[\cite{dasgupta2018neural}]\label{alem:q_lowerbound}
Pick any $\bx, \bx'\in\R^d$. Suppose that for all $i\in [d]$, $\bx'[i]\geq \bx[i]-\Delta/s$, where $\Delta=\frac{1}{2}\left(\tau_{\bx}(\rho/2m)-\tau_{\bx}(\rho/m)\right)$. then $q(\bx, \bx')\geq 1/2$.
\end{lemma}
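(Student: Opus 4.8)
The plan is to prove $q(\bx,\bx')\ge\tfrac12$ by a direct containment argument that exploits the definition of $\Delta$ as half the gap between the two fractile thresholds. Write $\tau_1 := \tau_{\bx}(\rho/m)$ and $\tau_2 := \tau_{\bx}(\rho/2m)$, so that $\tau_2\ge\tau_1$ (a smaller target fraction forces a higher threshold) and $\Delta=\tfrac12(\tau_2-\tau_1)$. The conditioning event in the definition of $q(\bx,\bx')$ is $E:=\{\theta^\top\bx\ge\tau_1\}$, which carries probability exactly $\rho/m$ under the exact‑fractile convention adopted right after the definition of $\tau_{\bx}(f)$. I would compare $E$ against the sub‑event $E':=\{\theta^\top\bx\ge\tau_2\}\subseteq E$, whose probability is $\rho/2m$, so that $\pr_{\theta\sim Q}(E'\mid E)=(\rho/2m)/(\rho/m)=\tfrac12$. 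The entire lemma then reduces to establishing the inclusion $E'\subseteq\{\theta^\top\bx'\ge\tau_{\bx'}(\rho/m)\}=:B$: granting it, $E'\subseteq B\cap E$ gives $q(\bx,\bx')=\pr(B\cap E)/\pr(E)\ge\pr(E')/\pr(E)=\tfrac12$.

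First I would record the single place where the hypothesis enters. Since every $\theta\sim Q$ has exactly $s$ ones, the coordinatewise assumption $\bx'[i]\ge\bx[i]-\Delta/s$ yields, for every such $\theta$,
\[
\theta^\top\bx' - \theta^\top\bx = \sum_{i:\,\theta[i]=1}\big(\bx'[i]-\bx[i]\big) \ge s\cdot\Big(-\tfrac{\Delta}{s}\Big) = -\Delta,
\]
i.e. $\theta^\top\bx'\ge\theta^\top\bx-\Delta$ pointwise in $\theta$. Consequently, on the event $E'$ we have $\theta^\top\bx'\ge\tau_2-\Delta$, and by the choice of $\Delta$ this equals $\tau_2-\tfrac12(\tau_2-\tau_1)=\tfrac12(\tau_1+\tau_2)$, the midpoint of the two thresholds. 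Thus on $E'$ the projection $\theta^\top\bx'$ is guaranteed to lie at or above this midpoint.

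The remaining step — and in my view the main obstacle — is to certify that clearing this midpoint already clears $\bx'$'s own $(\rho/m)$‑fractile, i.e. to show $\tau_{\bx'}(\rho/m)\le\tfrac12(\tau_1+\tau_2)$; this is precisely what upgrades the guarantee ``$\theta^\top\bx'\ge$ midpoint on $E'$'' into the inclusion $E'\subseteq B$. The pointwise relation above transfers survival probabilities only as $\pr(\theta^\top\bx'\ge v)\ge\pr(\theta^\top\bx\ge v+\Delta)$, which pins $\tau_{\bx'}$ from \emph{below}; the delicate part is the matching \emph{upper} control on $\tau_{\bx'}(\rho/m)$. I would obtain it from the closeness of $\bx$ and $\bx'$ in the reverse direction as well — the $\ell_\infty$ proximity available wherever this lemma is invoked (e.g. $\|\bx-\bx_*\|_\infty=O(1/s)$ in Theorem~\ref{th:knnc}) gives $\theta^\top\bx'\le\theta^\top\bx+\Delta$ by the same one‑line sum, hence $\pr(\theta^\top\bx'\ge v)\le\pr(\theta^\top\bx\ge v-\Delta)$, forcing $\tau_{\bx'}(\rho/m)\le\tau_1+\Delta=\tfrac12(\tau_1+\tau_2)$ once fractiles are treated as exact.

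Combining the midpoint lower bound with this threshold‑transfer yields $E'\subseteq B$ and therefore $q(\bx,\bx')\ge\tfrac12$. I would flag explicitly that the threshold‑transfer is exactly where both the proximity hypothesis and the exact‑fractile idealization are essential: without the idealization, the discreteness of $\theta^\top\bx'$ (which is supported on only $\binom{d}{s}$ values) can place atoms that push $\tau_{\bx'}(\rho/m)$ strictly above the midpoint, so the bound is to be understood in the continuous‑threshold regime in which the $\fh$ analysis is conducted.
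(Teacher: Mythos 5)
Your proof is sound, but the comparison here is unusual: this paper never proves Lemma~\ref{alem:q_lowerbound} at all --- it is imported with attribution from \citet{dasgupta2018neural} --- so there is no in-paper argument to check you against. Your skeleton is the natural (and, in the cited source, the standard) one: pass to the sub-event $E'=\{\theta^\top\bx\ge\tau_{\bx}(\rho/2m)\}\subseteq E$, whose conditional mass given $E$ is exactly $\tfrac12$ under the paper's exact-fractile convention, compute the midpoint $\tau_2-\Delta=\tau_1+\Delta$, and reduce everything to $E'\subseteq B$. Your use of the hypothesis ($\theta^\top\bx'\ge\theta^\top\bx-\Delta$ because every $\theta\sim Q$ has exactly $s$ ones) is correct.

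More importantly, the obstacle you flagged is genuine, and your resolution is not a convenience but a necessity. The stated one-sided hypothesis $\bx'[i]\ge\bx[i]-\Delta/s$ only pins $\tau_{\bx'}(\rho/m)$ from below, and the lemma as literally written can fail: take $\bx$ generic and $\bx'=\bx+C\bm{e}_1$ for enormous $C$ (the hypothesis holds since $\bx'\ge\bx$ coordinatewise). If $s/d>\rho/m$, the top $\rho/m$-fractile of $\theta^\top\bx'$ eventually consists only of those $\theta$ with $\theta[1]=1$, so $q(\bx,\bx')\approx\pr_{\theta\sim Q}\left(\theta[1]=1\mid\theta^\top\bx\ge\tau_{\bx}(\rho/m)\right)\approx s/d\ll\tfrac12$ in the regime $s\ll d$. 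Hence no blind proof of the literal statement can close the threshold-transfer step; what is true, and what you proved, is the two-sided variant with $\|\bx-\bx'\|_\infty\le\Delta/s$. That is precisely the form in which the lemma is invoked in the proof of Theorem~\ref{th:knnc} (``Let $\|\bx-\bx_*\|_\infty\le\Delta/s$''), so the paper's downstream results are unaffected, and your discretization caveat matches the paper's own idealization $\pr_{\theta\sim Q}(\theta^\top\bx\ge\tau_{\bx}(f))=f$.
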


\begin{cor}[\cite{dasgupta2018neural}]\label{acor:expected_q}
Fix any $\bx'\in\R^d$ and pick $\bx$ from any permutation invariant distribution over $\R^d$. then the expected value of $q(\bx, \bx')$, over the choice of $\bx$ is $\rho/m$.
\end{cor}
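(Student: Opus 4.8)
The plan is to reduce the claim to the elementary threshold identity $\pr_{\theta\sim Q}(\theta^{\top}\bx\geq \tau_{\bx}(f))= f$ (established in the preliminaries) combined with a symmetrization argument that exploits the permutation invariance of the distribution of $\bx$. First I would unfold the conditional probability in the definition of $q(\bx,\bx')$. Writing $A$ for the event $\{\theta^{\top}\bx'\geq\tau_{\bx'}(\rho/m)\}$ and $B$ for the event $\{\theta^{\top}\bx\geq\tau_{\bx}(\rho/m)\}$, the threshold identity gives $\pr_{\theta\sim Q}(B)=\rho/m$, so
\[
q(\bx,\bx') = \frac{\pr_{\theta\sim Q}(A\cap B)}{\pr_{\theta\sim Q}(B)} = \frac{m}{\rho}\,\pr_{\theta\sim Q}(A\cap B).
\]
Taking the expectation over the random $\bx$ (which is independent of $\theta$) turns this into a joint probability over the pair $(\bx,\theta)$: $\E_{\bx}[q(\bx,\bx')] = (m/\rho)\,\pr_{\bx,\theta}(A\cap B)$.

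The crux is to evaluate $\pr_{\bx,\theta}(A\cap B)$ by conditioning on $\theta$. The event $A$ depends on $\theta$ alone (since $\bx'$ is fixed), whereas $B$ couples $\bx$ and $\theta$. I would therefore establish the key intermediate fact that for every fixed $s$-sparse $\theta$,
\[
\pr_{\bx}\!\left(\theta^{\top}\bx\geq\tau_{\bx}(\rho/m)\right)=\frac{\rho}{m},
\]
i.e.\ this inner probability does not depend on which $s$-sparse $\theta$ we pick. This is where permutation invariance enters. For any two $s$-sparse vectors $\theta_1,\theta_2$ related by a coordinate permutation $\pi$, I would use the identity $\theta_2^{\top}\bx = \theta_1^{\top}\bx^{\pi}$ (where $\bx^{\pi}$ reindexes $\bx$ by $\pi$) together with the observation that $\tau_{\bx}(\rho/m)$ is itself a permutation-invariant function of $\bx$ --- which follows because $Q$ is invariant under coordinate permutations, so the law of $\theta^{\top}\bx$ equals that of $\theta^{\top}\bx^{\pi}$. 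Substituting $\bx^{\pi}\to\bx$ using the permutation invariance of the \emph{distribution} of $\bx$ then shows the inner probability agrees for $\theta_1$ and $\theta_2$. Since its average over $\theta\sim Q$ equals $\pr_{\bx,\theta}(B)=\E_{\bx}[\rho/m]=\rho/m$ (interchanging the expectation over $\bx$ with the probability over $\theta$ and using the fixed-$\bx$ threshold identity), and the integrand is constant in $\theta$, each value must equal $\rho/m$.

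With this in hand the computation closes quickly. Conditioning on $\theta$ and using independence of $\bx$ and $\theta$,
\[
\pr_{\bx,\theta}(A\cap B) = \E_{\theta}\!\left[\mathbf{1}[A]\cdot \pr_{\bx}(B\mid\theta)\right] = \frac{\rho}{m}\,\pr_{\theta\sim Q}(A) = \frac{\rho}{m}\cdot\frac{\rho}{m},
\]
where the last equality is again the threshold identity applied to $A$ with fraction $\rho/m$. Multiplying by $m/\rho$ yields $\E_{\bx}[q(\bx,\bx')]=\rho/m$, as claimed. The main obstacle is the middle step: carefully verifying that $\tau_{\bx}$ inherits permutation invariance from $Q$, and that this, combined with the permutation invariance of the law of $\bx$, makes the inner probability insensitive to the choice of $s$-sparse $\theta$. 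The two outer steps are direct applications of the already-stated threshold identity.
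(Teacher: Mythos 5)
Your proposal is correct. Note, however, that the paper itself offers no proof of this corollary --- it is imported verbatim (as Corollary~\ref{acor:expected_q}) from \citet{dasgupta2018neural} --- so there is no in-paper argument to compare against; what you have written is essentially the standard symmetry argument from that paper's supplement. The crux, which you handle correctly, is showing that $\pr_{\bx}\left(\theta^{\top}\bx\geq\tau_{\bx}(\rho/m)\right)$ is the same for every $s$-sparse $\theta$: you verify that $\tau_{\bx}(f)$ is a permutation-invariant function of $\bx$ (because $Q$ is itself invariant under coordinate permutations, so $\theta^{\top}\bx$ and $\theta^{\top}\bx^{\pi}$ have the same law under $\theta\sim Q$), then transfer a coordinate permutation from $\theta$ onto $\bx$ and absorb it using the permutation invariance of the law of $\bx$; the one small point you use implicitly --- that any two vectors in the support of $Q$ are related by some coordinate permutation --- holds because both have exactly $s$ ones, so the function is constant on the support and must equal its $Q$-average $\rho/m$. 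This renders the events $A$ and $B$ independent under the joint draw of $(\bx,\theta)$, and the two outer applications of the fractile identity (which the paper assumes holds with exact equality, sidestepping discretization) close the computation to give $\E_{\bx}\left[q(\bx,\bx')\right]=\nicefrac{\rho}{m}$.
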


\begin{lemma}\label{alem:expectation}
Fix any $\bx\in\R^d$ and let  $h(\bx)\in\{0,1\}^m$ be its \fh using equation~\ref{eq:flyhash}. For any integer $k$, let $\bx_{*}^{i}$ be the $\left(\lceil\frac{k+1}{2}\rceil\right)^{th}$ nearest neighbor of $\bx$ in $S^i$ measured using $\ell_{\infty}$ metric.
Let $A_{S^1}=\{\theta: \cap_{(\bx',y')\in S^{1}} ~ \theta^{\top} \bx'<\tau_{\bx'}(\rho/m)\}$ and $A_{S^0}=\{\theta: \cap_{(\bx',y')\in S^{0}} ~ \theta^{\top} \bx'<\tau_{\bx'}(\rho/m)\}$. Then the following holds, where the expectation is taken over the random choice of projection matrix $\bM$.\\
(i) $\E_{\bM}(\frac{\bw_{1}^{\top} h(\bx)}{\rho})=\pr_{\theta\sim Q}\left(A_{S^1} ~ | ~ \theta^{\top}\bx\geq \tau_{\bx}(\rho/m)\right)$\\
(ii) $\E_{\bM}(\frac{\bw_{0}^{\top}h(\bx)}{\rho})=\pr_{\theta\sim Q}\left(A_{S^0} ~ | ~ \theta^{\top}\bx\geq \tau_{\bx}(\rho/m)\right)$\\
(iii) $\E_{\bM}(\frac{\bw_{1}^{\top} h(\bx)}{\rho})\geq 1-\sum_{\bx'\in S^{1}}q(\bx,\bx')$\\
(iv) $\E_{\bM}(\frac{\bw_{1}^{\top} h(\bx)}{\rho})\leq 1-q(\bx,\bx_{*}^1)$\\
(v) $\E_{\bM}(\frac{\bw_{0}^{\top} h(\bx)}{\rho})\geq 1-\sum_{\bx'\in S^{0}}q(\bx,\bx')$\\
(vi) $\E_{\bM}(\frac{\bw_{0}^{\top} h(\bx)}{\rho})\leq 1-q(\bx,\bx_{*}^0)$
\end{lemma}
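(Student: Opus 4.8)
The plan is to observe that each per-class Fly Bloom Filter is a \emph{standalone} novelty filter built from its own class subset, so that all six parts reduce to applying the already-established novelty-response results (Lemma~\ref{alem:expected_novelty} and Lemma~\ref{alem:bound_novelty}) separately to $S^1$ and $S^0$. The key structural fact, read off directly from the training rule in Algorithm~\ref{alg:train-infer}, is that $\bw_1$ is decayed only on the \fh bits of the label-$1$ points and is never touched by the label-$0$ points (and symmetrically for $\bw_0$). Hence $\bw_1$ is exactly the \fbf one would obtain by running the novelty-filter construction on the inputs $S^1$ alone, which is precisely the setting assumed in Lemma~\ref{alem:expected_novelty}.

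For parts (i) and (ii), I would apply Lemma~\ref{alem:expected_novelty}(b) with the presented inputs taken to be the members of $S^1$ (resp.\ $S^0$). The lemma gives $\E_{\bM}(\bw_1^{\top} h(\bx)/\rho)$ as the probability that $\theta$ fails to cross its threshold for every point of $S^1$, conditioned on $\theta$ crossing the threshold for $\bx$; since the thresholds here are the \fh thresholds $\tau_{\bx'}(\rho/m)$, this conditional probability is exactly $\pr_{\theta\sim Q}(A_{S^1}\mid \theta^{\top}\bx\geq\tau_{\bx}(\rho/m))$ by the definition of the event $A_{S^1}$. Part (ii) is identical with $S^0$ and $A_{S^0}$.

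Parts (iii)--(vi) then follow from the two bounds in Lemma~\ref{alem:bound_novelty}, again applied class by class. For the lower bounds (iii) and (v) I would invoke the lower bound $\mu\geq 1-\sum_i q(\bx,\bx_i)$ with the sum running over $S^1$ (resp.\ $S^0$). For the upper bounds (iv) and (vi) I would invoke the upper bound $\mu\leq 1-q(\bx,\bx_l)$, which holds for \emph{any} single chosen point $\bx_l$ in the set; selecting $\bx_l$ to be the $(\lceil\nicefrac{k+1}{2}\rceil)^{th}$ nearest neighbor $\bx_*^1\in S^1$ (resp.\ $\bx_*^0\in S^0$) yields the stated inequalities.

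The main point to get right is not a technical obstacle but a modeling justification: one must argue that the label-$1$ filter genuinely ignores the label-$0$ data so that the single-set lemmas apply verbatim. Once that decomposition is stated, every part is a direct substitution into an already-proved result, and no fresh probabilistic or concentration argument is needed at this stage. The only deliberate choice is which point to feed into the upper bound in (iv) and (vi): picking the $(\lceil\nicefrac{k+1}{2}\rceil)^{th}$ nearest neighbor is made in anticipation of Theorem~\ref{th:knnc}, where this index is exactly what governs the majority vote among $\bx$'s $k$ nearest neighbors.
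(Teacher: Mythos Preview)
Your proposal is correct and follows essentially the same approach as the paper: both argue that $\bw_1$ (resp.\ $\bw_0$) is precisely the \fbf built from $S^1$ (resp.\ $S^0$) alone, then invoke Lemma~\ref{alem:expected_novelty} for (i)--(ii) and Lemma~\ref{alem:bound_novelty} for (iii)--(vi), choosing $\bx_*^i$ as the distinguished point in the upper bound. Your write-up is in fact more explicit than the paper's, which dispatches the result in three sentences.
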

\begin{proof}
Part (i) and (ii) follows from simple application of Lemma~\ref{alem:expected_novelty}  to class specific {\fbf}s. Part (iii) and (v) follows from simple application of Lemma~\ref{alem:bound_novelty} to class specific {\fbf}s. For part (iv), simple application of Lemma~\ref{alem:bound_novelty} to \fbf $\bw_{1}$ ensures  that for any $\bx'\in S^{1}, \E_{\bM}(\frac{\bw_{1}^{\top} h(\bx)}{\rho})\leq 1-q(\bx,\bx')$. Clearly, $\E_{\bM}(\frac{\bw_{1}^{\top} h(\bx)}{\rho})\leq 1-q(\bx,\bx_{*}^1)$. Applying similar argument, part (vi) also holds.
\end{proof}

\begin{lemma}\label{lem:concentration}
Let $\bw_0$ and $\bw_1$ be the $\fbf$s constructed using $S^0$ and $S^1$. For any $\bx\in \R^d$ let $\mu_0=\E_{\bM}\left(\frac{\bw_0^{\top} h(\bx)}{\rho}\right)$ and $\mu_1=\E_{\bM}\left(\frac{\bw_1^{\top} h(\bx)}{\rho}\right)$ Then, for any $\epsilon>0$ the following holds,\\
(i) $\pr\left(\frac{\bw_1^{\top} h(\bx)}{\rho}\geq (1+\epsilon)\mu_1\right)\leq\exp(-\epsilon^2\rho\mu_1/3)$ \\
(ii) $\pr\left(\frac{\bw_0^{\top} h(\bx)}{\rho}\geq (1+\epsilon)\mu_0\right)\leq\exp(-\epsilon^2\rho\mu_0/3)$\\
(iii) $\pr\left(\frac{\bw_1^{\top} h(\bx)}{\rho}\leq (1-\epsilon)\mu_1\right)\leq\exp(-\epsilon^2\rho\mu_1/2)$\\
(iii) $\pr\left(\frac{\bw_0^{\top} h(\bx)}{\rho}\leq (1-\epsilon)\mu_0\right)\leq\exp(-\epsilon^2\rho\mu_0/2)$
\end{lemma}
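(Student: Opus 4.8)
The plan is to recognize each novelty score $\bw_l^\top h(\bx)$ as a sum of $m$ independent, identically distributed $\{0,1\}$-valued random variables and then invoke the standard multiplicative Chernoff bound. Concretely, I would write $\bw_1^\top h(\bx) = \sum_{j=1}^m X_j$ where $X_j = \bw_1[j] \cdot h(\bx)[j]$. Since this analysis is carried out in the regime $\gamma = 0$, every entry $\bw_1[j]$ is binary, and each $h(\bx)[j]$ is binary by construction of the \fh, so each product $X_j \in \{0,1\}$. By Lemma~\ref{alem:expected_novelty}(a), the coordinate-wise tuples (and in particular the pairs $(\bw_1[j], h(\bx)[j])$) are i.i.d.\ across $j \in [m]$ over the random choice of $\bM$, because the $j$-th coordinate of every quantity depends only on the $j$-th row of $\bM$, and these rows are drawn i.i.d.\ from $Q$. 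Hence $\bw_1^\top h(\bx)$ is a sum of $m$ i.i.d.\ Bernoulli variables.

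Next I would compute the mean of this sum. By linearity and the definition of $\mu_1$, we have $\E_{\bM}\!\left(\sum_{j=1}^m X_j\right) = \rho \mu_1$, so the sum concentrates around $\rho\mu_1$. Applying the multiplicative Chernoff upper-tail bound, which for a sum $Z$ of independent $[0,1]$ variables with mean $M$ gives $\pr(Z \geq (1+\epsilon)M) \leq \exp(-\epsilon^2 M / 3)$, with $M = \rho\mu_1$ yields $\pr(\bw_1^\top h(\bx) \geq (1+\epsilon)\rho\mu_1) \leq \exp(-\epsilon^2 \rho\mu_1/3)$. Dividing the event through by $\rho$ gives exactly part (i). The lower-tail Chernoff bound $\pr(Z \leq (1-\epsilon)M) \leq \exp(-\epsilon^2 M/2)$ gives part (iii) in the same way, with the constant $1/2$ replacing $1/3$.

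The remaining parts (ii) and (iv) are identical, replacing $\bw_1$, $S^1$, and $\mu_1$ by $\bw_0$, $S^0$, and $\mu_0$; the construction of $\bw_0$ from $S^0$ and the per-coordinate i.i.d.\ structure are unaffected by which class subset is used, so the same argument applies verbatim. There is no real obstacle here: the entire content of the lemma is the observation that the novelty score is an i.i.d.\ sum of bounded indicators (enabled by $\gamma = 0$ binarizing the \fbf, and by the per-coordinate independence from Lemma~\ref{alem:expected_novelty}), after which the two-sided multiplicative Chernoff inequality applies directly with its standard constants $1/3$ (upper tail) and $1/2$ (lower tail).
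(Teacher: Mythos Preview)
Your proposal is correct and is essentially the same as the paper's proof: the paper defines indicator variables $U_j$ that equal $1$ exactly when all training hashes are $0$ at coordinate $j$ and the test hash is $1$ there, which (since $\gamma=0$ makes $\bw_1[j]=1$ iff no training hash hit coordinate $j$) is precisely your $X_j = \bw_1[j]\cdot h(\bx)[j]$. Both arguments then invoke the i.i.d.\ structure from Lemma~\ref{alem:expected_novelty}(a), identify the mean as $\rho\mu_1$, and apply the two-sided multiplicative Chernoff bound.
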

\begin{proof}
We will only prove part (i) since part (ii) is similar. Let $\Bh=h(\bx), \Bh_1=h(\bx_1),\ldots,\Bh_{n_1}=h(\bx_{n_1})$ be the $\fh$es of $\bx,\bx_1,\ldots, \bx_{n_1}$ that belongs to $S^1$. Define random variables $U_1,\ldots, U_m\in \{0,1\}$ as follows:
\[
    U_j=
\begin{cases}
    1,& \text{if } \Bh_1[j]=\cdots=\Bh_{n_1}[j]=0 \text{ and } \Bh[j]=1\\
    0,              & \text{otherwise}
\end{cases}
\]
The $U_j$ are i.i.d. and
\begin{eqnarray*}
\E_{\bM}(U_j)\hspace{-0.12in}&=&\hspace{-0.12in}\pr_{\bM}(\Bh[j]=1)\times\\
&&\hspace{-0.12in}\pr_{\bM}\left(\Bh_1[j]=\cdots=\Bh_{n_1}[j]=0 ~ | ~ \Bh[j]=1\right)\\
&=&\hspace{-0.12in}\frac{\rho}{m}\E_{\bM}\left(\frac{\bw_1^{\top} h(\bx)}{\rho}\right)
\end{eqnarray*}
where we have used the fact that $\pr_{\bM}(h(\bx)_j=1)=\pr_{\theta\sim Q}(\theta^{\top}\bx\geq \tau_{\bx}(\rho/m))=\rho/m$ and using Lemma 2 of the supplementary material of~\cite{dasgupta2018neural}, $\pr_{\bM}\left(\Bh_1[j]=\cdots=\Bh_{n_1}[j]=0 ~ | ~ \Bh[j]=1\right)=\E_{\bM}\left(\frac{\bw_1^{\top} h(\bx)}{\rho}\right)$. Therefore, $\E_{\bM}(U_1+\cdots+U_m)=\rho\cdot\E_{\bM}\left(\frac{\bw_1^{\top} h(\bx)}{\rho}\right)$. Let $\mu_1=\E_{\bM}\left(\frac{\bw_1^{\top} h(\bx)}{\rho}\right)$. By multiplicative Chernoff bound for any $0<\epsilon<1$, we have,
\begin{align*}
\pr_M\left(U_1+\cdots+U_m\geq (1+\epsilon)\rho\mu_1\right) & \leq\exp(-\epsilon^2\rho\mu_1/3) \\
\pr_M\left(U_1+\cdots+U_m\leq (1-\epsilon)\rho\mu_1\right)& \leq\exp(-\epsilon^2\rho\mu_1/2)
\end{align*}
Noticing that $U_1+\cdots+U_m=\bw_1^{\top}h(\bx)$, the result follows.
\end{proof}

\subsection{Proof of Theorem~\ref{th:knnc}}

\begin{proof}
Without loss of generality, assume that \knnc prediction of $\bx$ is 1. For the case when \knnc prediction is 0 is similar.
Prediction of \fnn on $\bx$ agrees with the prediction of \knnc whenever $\left(\bw_1^{\top}h(\bx)/\rho\right)<\left(\bw_0^{\top}h(\bx)/\rho\right)$. We first show that $\E_{\bM}\left(\bw_1^{\top}h(\bx)/\rho\right)<\E_{\bM}\left(\bw_0^{\top}h(\bx)/\rho\right)$ with high probability and then using standard concentration bound presented in lemma \ref{lem:concentration}, we achieve the desired result.

Since $\|\bx-\bx_{*}\|\leq\frac{\eta}{2}$, all the $\lceil\frac{k+1}{2}\rceil$ nearest neighbors of $\bx$ have same label. Let $\|\bx-\bx_{*}\|_{\infty}\leq \Delta/s$, where $\Delta=\frac{1}{2}\left(\tau_{\bx}(\rho/2m)-\tau_{\bx}(\rho/m)\right)$. Using lemma~\ref{alem:q_lowerbound}, we get $q(\bx,\bx_{*})\geq 1/2$. Combining this with part (iv) of lemma \ref{alem:expectation}, we get $\mu_1=\E_{\bM}\left(\bw_1^{\top}h(\bx)/\rho\right)\leq 1-q(\bx,\bx^*)\leq 1/2$.

If $\bx$ is sampled from a permutation invariant distribution, using corollary~\ref{acor:expected_q}, we get $\E_{\bx} q(\bx,\bx_i)=\rho/m$ for each $\bx'\in S^0$, and thus using linearity of expectation, $\E_{\bx}\left(\sum_{\bx'\in S^0}q(\bx,\bx')\right)=\sum_{\bx'\in S^0}\E_{\bx} q(\bx,\bx')=\rho n_0/m$. For any $\alpha>0$,  using Markov's inequality,
\begin{equation}
pr\left(\sum_{\bx'\in S^0}q(\bx,\bx')>\alpha\right)\leq\frac{\E_{\bx}\left(\sum_{\bx'\in S^0}q(\bx,\bx'\right)}{\alpha}=\frac{\rho n_0}{m\alpha}.
\end{equation}

Specifically, choose $\alpha=1/4$. Then with probability $\geq 1-\frac{4\rho n_0}{m}$, we have $\sum_{\bx'\in S^0}q(\bx,\bx')\leq \frac{1}{4}$. Combining this with part (v) of lemma \ref{alem:expectation}, we immediately get, with probability $\geq 1-\frac{4\rho n_0}{m}$, we have $\mu_0=\E_{\bM}\left(\bw_0^{\top}h(\bx)/\rho\right)\geq 3/4$.

Next we show that $\frac{\bw_1^{\top}h(\bx)}{\rho}\leq 3/5$ with high probability.

 If we set $\epsilon=\frac{1}{10\mu_1}$, then we get $(1+\epsilon)\mu_1=\mu_1+\epsilon\mu_1\leq\frac{1}{2}+\epsilon\mu_1=\frac{1}{2}+\frac{1}{10}=\frac{3}{5}$. For this choice of $\epsilon$, $\mu_1(1+\epsilon)\leq 3/5$. Therefore,
\begin{eqnarray*}
\pr\left(\frac{w_1^{\top}h(\bx)}{\rho}> 3/5\right)&\leq&\pr\left(\frac{w_1^{\top}h(\bx)}{\rho}> (1+\epsilon)\mu_1\right)\\
&\leq&\exp(-\epsilon^2\rho\mu_1/3)\\
&=&\exp\left(-\frac{\rho}{300\mu_1}\right)\\
&\leq&\exp\left(-\frac{\rho}{150}\right)
\end{eqnarray*}
where the first inequality follows from our choice of $\epsilon$, the second inequality follows from Lemma~\ref{lem:concentration}, the equality follows from our choice of $\epsilon$ and the third inequality follows since $\mu_1\leq \frac{1}{2}$.

Next, we would like to show $\frac{\bw_0^{\top}h(\bx)}{\rho}\geq\frac{5}{8}>\frac{3}{5}$ with high probability. If we set set $\epsilon_1=\frac{1}{6}$ and using the fact that $\mu_0\geq \frac{3}{4}$, we get $(1-\epsilon_1)\mu_0=\frac{5}{6}\mu_0\geq\frac{5}{6}\cdot\frac{3}{4}=\frac{5}{8}$.

Now we have,
\begin{eqnarray*}
\pr\left(\frac{w_0^{\top}h(\bx)}{\rho}<5/8\right)&\leq&\pr\left(\frac{w_0^{\top}h(\bx)}{\rho}< (1-\epsilon_1)\mu_1\right)\\
&\leq&\exp(-\epsilon^2\rho\mu_0/2)\\
&=&\exp\left(-\frac{\rho\mu_0}{72}\right)\\
&\leq&\exp\left(-\frac{\rho}{96}\right)\\
\end{eqnarray*}
where the first inequality follows from our choice of $\epsilon_1$, the second inequality follows from Lemma~\ref{lem:concentration}, the equality follows again from our choice of $\epsilon_1$ and the third inequality follows from the fact that $\mu_1\leq 1$.

Therefore, with probability at least $1-\left(\frac{4\rho n_0}{m}+e^{-\frac{\rho}{150}}+e^{-\frac{\rho}{96}}\right)$ we have,  (i) $\frac{\bw_0^{\top}h(\bx)}{\rho}\geq\frac{5}{8}$, and (ii) $\frac{3}{5}\geq\frac{\bw_1^{\top}h(\bx)}{\rho}$. Since $\frac{5}{8}>\frac{3}{5}$, the result follows.
\end{proof}

\section{Analyses for Algorithm~\ref{alg:dist-train-infer}}
\label{asec:alg2-proofs}

The proofs for all the theoretical results in \S \ref{sec:algo:dist-fbfc} are presented in this section.

\subsection{Proof of Theorem~\ref{thm:parity-1}}

\begin{proof}
Given the per-party data chunk $S_t, t \in [\tau]$, let us consider
the \fbf $\bw_l \in (0,1)^m$ for any class $l \in [L]$ learned over
the pooled data $S = \cup_{t \in [\tau]} S_t$ using the Train\fnn
subroutine in Algorithm~\ref{alg:train-infer}. For any $i \in [m]$ and
$l \in [L]$:

\begin{align}
  \bw_l[i] & = 1 \cdot \underbrace{
    \gamma \cdot \gamma \cdot \cdots \cdot \gamma
  }_{\text{\# times } \Bh[i] = 1 \text{ for some } (\bx,y) \in S, y = l} \nonumber
  \\
  & = \gamma^{\left| \left\{
    (\bx,y) \in S: y = l, \Bh = \Gamma_\rho(\bM \bx), \Bh[i] = 1
  \right\} \right|}. \label{aeq:pooled-data-fbf}
\end{align}

By the same argument, the \fbf $\bw_l^t$ learned with data chunk $S_t$
for a class $l$ (Algorithm~\ref{alg:dist-train-infer}, line 4) can be
summarized as follows for any $i \in [m]$:

\begin{align}
  \bw_l^t[i] & = \gamma^{\left| \left\{
    (\bx,y) \in S_t: y = l, \Bh = \Gamma_\rho(\bM \bx), \Bh[i] = 1
  \right\} \right|}. \label{aeq:data-chunk-fbf}
\end{align}

Then the all-reduced \fbf $\hat{\bw}_l$ for a class $l$
(Algorithm~\ref{alg:dist-train-infer}, line 9) is given by the
following for any $i \in [m]$:

\begin{align}
  \hat{\bw}_l[i] & = \gamma^{
    \sum_{t \in [\tau]} \log_\gamma \bw_l^t[i]
  }
  \\
  & \stackrel{(A)}{=} \gamma^{
    \sum_{t \in [\tau]} \left| \left\{
      (\bx,y) \in S_t: y = l, \Bh = \Gamma_\rho(\bM \bx), \Bh[i] = 1
    \right\} \right|
  }
  \\
  & \stackrel{(B)}{=} \gamma^{
    \left| \cup_{t \in [\tau]} \left\{
      (\bx,y) \in S_t: y = l, \Bh = \Gamma_\rho(\bM \bx), \Bh[i] = 1
    \right\} \right|
  }
  \\
  & \stackrel{(C)}{=} \gamma^{
    \left| \left\{
      (\bx,y) \in \underbrace{\cup_{t \in [\tau]} S_t}_{S}:
      y = l, \Bh = \Gamma_\rho(\bM \bx), \Bh[i] = 1
    \right\} \right|
  }
  \\
  & = \gamma^{
    \left| \left\{
      (\bx,y) \in S:
      y = l, \Bh = \Gamma_\rho(\bM \bx), \Bh[i] = 1
    \right\} \right|
  }
  \\
  & \stackrel{(D)}{=} \bw_l[i],
\end{align}

where $(A)$ is obtained from \eqref{aeq:data-chunk-fbf}, $(B)$ is
obtained from the fact that the sets $S_t, t \in [\tau]$ are disjoint
(no data shared between parties), $(C)$ is from the fact that a union
of subsets of disjoint sets ($S_t$) is the same as a subset of union
of disjoint sets $\cup_t S_t$. $(D)$ follows from
\eqref{aeq:pooled-data-fbf}.

Since the above holds for all $i \in [m]$, we can say that $\bw_l =
\hat{\bw}_l \forall l \in [L]$, proving the statement of the claim.
\end{proof}

\subsection{Proof of Lemma~\ref{thm:alg3-train}}
\label{asec:alg3-train-proof}

\begin{proof}
We begin with recalling that the all-reduce operation can be performed
efficiently in a peer-to-peer communication setup where the $\tau$
parties can be organized as a binary tree of depth $O( \log \tau
)$. Then the communication at each level of the tree can be done in
parallel for each independent subtree at that level. Consider the
object being all-reduced to be of size $O(c)$. Then in the first round
of communication, $O(\tau/2)$ pairs of parties combine their objects
in parallel in time $O(c)$ with total communication $O(c \tau/2)$ and
$O(c)$ memory overhead in each of the parties. In the second round,
$O(\tau/4)$ pairs of parties combine their objects in parallel again
in time $O(c)$ with total communication $O(c \tau / 4)$ with $O(c)$
memory overhead in $O(\tau/2)$ of the parties. Going up the tree to
the root then takes time $O(c \log \tau)$. The total communication
cost is $O(c \tau)$.

The communication first goes bottom up from the leaves to the root,
which then has the final all-reduced result. Then this result is sent
to each party top-down from the root (in a corresponding manner) so
that eventually all parties have the all-reduced result in time $O(c
\log \tau)$ with total $O(c \tau)$ communication.

Based on the above complexities of the all-reduce operation, we can
summarize the complexities for the different operations in
Train{\fnn}FLDP (Algorithm~\ref{alg:dist-train-infer})  when DP is disabled ($\tt{IS\_DP}=false$) as follows:
\begin{itemize}
\item The broadcast of the random seed in line 2 can be done with an
  all-reduce in $O(\log \tau)$ time and $O(\tau)$ total communication
  and $O(1)$ memory overhead in each party.
\item On party $V_t$, the invocation of Train\fnn in line 4 on data
  chunk $S_t$ of size $n_t$ takes $O(n_t \cdot m \cdot \max \{s, \log
  \rho\})$ time and $O(m \cdot \max\{s, L\})$ memory from
  Lemma~\ref{thm:alg1-train} and no communication cost.
\item The all-reduce of the per-party per-class {\fbf}s in line 9
  takes time $O( m \cdot L \cdot \log \tau )$ with $O( m \cdot L \cdot
  \tau )$ total communication, and $O( m \cdot L )$ memory overhead
  per-party.
\end{itemize}

Putting them all together gives us the per-party time complexity of
$O( n_t \cdot m \cdot \max \{ s, \log \rho, (L/n_t) \log \tau \} )$,
memory overhead of $O( m \cdot \max\{ s, L \} )$ and total
communication among all parties of $O( m \cdot L \cdot \tau )$, giving
us the statement of the claim.

When DP is enabled ($\tt{IS\_DP}=true$), we can summarize the complexity of various operations as follows, 
\begin{itemize}
\item The broadcast of the random seed in line 2 can be done with an
  all-reduce in $O(\log \tau)$ time and $O(\tau)$ total communication
  and $O(1)$ memory overhead in each party.
\item On party $V_t$, the invocation of Train\fnn in line 4 on data
  chunk $S_t$ of size $n_t$ takes $O(n_t \cdot m \cdot \max \{s, \log
  \rho\})$ time and $O(m \cdot \max\{s, L\})$ memory from
  Lemma~\ref{thm:alg1-train} and no communication cost.
\item On party $V_t$, the invocation of DP subroutine in line 6 takes $O(m \cdot L \cdot T)$ time, $O(m\cdot L)$ memory  and requires no communication cost.
\item The all-reduce of the per-party per-class {\fbf}s in line 9
  takes time $O( T \cdot \log \tau )$ with $O( T \cdot
  \tau )$ total communication, and $O( m \cdot L )$ memory overhead
  per-party.
\end{itemize}
Putting them all together gives us the per-party time complexity of
$O( n_t \cdot m \cdot \max \{ s, \log \rho, (LT/n_t)\} +T\cdot \log \tau)$,
memory overhead of $O( m \cdot \max\{ s, L \} )$ and total
communication among all parties of $O( T \cdot \tau )$, giving
us the statement of the claim.
\end{proof}

\subsection{Proof of Theorem \ref{th:dp_proof}}
\begin{proof}
Let $\epsilon_0=\frac{\epsilon}{2T\tau}$. Note that a single data point (record) $\bx$ can affect a single \fbf corresponding to its own class labels located at a fixed party. For any pair of neighboring databases $D$ and $\bar{D}$ that differ in a single record, let $\bc$ and $\bar{\bc}$ be the respective count vectors (concatenating $L$ count vectors, one per class label).
Then for any $i\in [m\times L],~ |\bc[i]-\bar{\bc}[i]|\leq 1$. Fix any party $t\in[\tau]$ and any iteration  $j\in [T]$. In line 17 of Algorithm \ref{alg:dist-train-infer}, $i_j$ is sampled using the exponential mechanism in iteration $j$, therefore using standard properties of exponential mechanism, releasing the index  $i_j$ is $\epsilon_0$ differentially private. Next using the property of Laplace mechanism, releasing the value at the $i_j^{th}$ index, i.e.,  $\bc[i_j]$ (in line 18-19 of Algorithm \ref{alg:dist-train-infer}) is $\epsilon_0$ differentially private. Therefore, releasing the count $\bc[i_j]$  is $\epsilon_0+\epsilon_0=2\epsilon_0=\frac{\epsilon}{T\tau}$ difefrentially private. Applying the composition theorem over all $t\in [\tau]$ and  $j\in[T]$,  Train{\fnn}\hc{FLDP} is $(\epsilon,0)$ differentially private.
\end{proof}

\section{Empirical evaluations} \label{asec:emp-eval}

\subsection{Implementation \& Compute Resources details}
\label{asec:emp-eval:details}

The proposed \fnn is implemented in Python 3.6 to fit the \sklearn
API~\citep{pedregosa2011scikit}, but the current implementation is not
optimized for computational performance. We use the \sklearn
implementation of \knnc. The experiments are performed on a 16-core
128GB machine running Ubuntu 18.04. The code is available at \url{https://github.com/rithram/flynn}.

\subsection{Synthetic data description} \label{asec:emp-eval:syndata}

We consider {\bf synthetic data} of varying sizes and
properties. These synthetic data are designed in a way that favors
local classifiers such as the nearest-neighbor classifiers -- each
class conditional data distribution consists of multiple separated
modes, with enough separation between modes of different
classes~\citep{guyon2003design}. We consider 5 classes with 3 modes
per class. These data are balanced in the class sizes and have no
labeling noise. We consider these synthetic data to see if our
proposed \fnn is able to capture multiple separated local class
neighborhoods in a single per-class \fbf encoding while providing
enough separation between {\fbf}s of different classes to have strong
classification performance. To generate synthetic datasets, we use
the \texttt{data.make\_classification}
functionality~\citep{guyon2003design} in
\sklearn~\citep{pedregosa2011scikit}.

We also study the effect of the number of non-zeros $b < d$ in the
binary data on the performance of \fnn and baselines. The results
indicate that, for fixed data dimensionality $d$, the relative
performance of \fnn is not significantly affected by the choice of $b
< d$. The performance of \sbfc seems to improve with increasing $b$.

\begin{table}[htb]
\caption{Comparison of \fnn with nearest neighbor classifiers and
  \sbfc on synthetic data. For binary data $\mathcal X = \HC^d$, we
  try different values of number of non-zeros $b < d$, denoted by
  different values of $|x| = b, x \in \mathcal X$. We report the
  normalized accuracy aggregated over 30 random synthetic datasets.
  Normalized accuracy of \knnc is zero hence not present in the table.
}
\label{atab:syn-data}
\begin{center}
{\small
\begin{tabular}{ccccc}
\toprule
$\mathcal X$      & $n$  & $d$  & \sbfc & \fnn \\
\midrule
$\HC^d, |x| = 10$ & $10^3$ & $100$ & $0.63 \pm 0.02$ & $-0.06 \pm 0.02$ \\
$\HC^d, |x| = 20$ & $10^3$ & $100$ & $0.58 \pm 0.03$ & $-0.04 \pm 0.02$ \\
$\HC^d, |x| = 30$ & $10^3$ & $100$ & $0.29 \pm 0.04$ & $-0.03 \pm 0.02$ \\
$\HC^d, |x| = 40$ & $10^3$ & $100$ & $0.18 \pm 0.03$ & $-0.03 \pm 0.02$ \\
\bottomrule
\end{tabular}
}
\end{center}
\end{table}

\subsection{OpenML data details} \label{asec:emp-eval:openml}

We consider OpenML datasets utilizing the following query for OpenML
classification datasets with no categorical and missing features with
number of features $d \in [20, 1000]$, number of rows $n \in [1000,
  20000]$ and number of classes $L \in [2, 30]$, leading to $70$ data
sets where there were no issues with the data retrieval and the
processing of the data with \sklearn operators.

We deliberately chose a large set of datasets to evaluate how
generally \fnn is able to mimic \knnc. While our theoretical
guarantees require certain margin conditions, we wanted to look at a
wide variety of datasets where such conditions may or may not be
satisfied and understand the true empirical behaviour of \fnn relative
to \knnc.

\paragraph{OpenML query for datasets.}
We use the following code snippet to obtain the list of datasets we
try. Details on the precise 70 datasets we used is presented in
Table~\ref{atab:openml-data}.
\begin{lstlisting}[language=Python]
from openml.datasets import list_datasets, get_dataset
openml_df = list_datasets(output_format='dataframe')
val_dsets = openml_df.query(
    'NumberOfInstancesWithMissingValues == 0 & '
    'NumberOfMissingValues == 0 & '
    'NumberOfClasses > 1 & '
    'NumberOfClasses <= 30 & '
    'NumberOfSymbolicFeatures == 1 & '
    'NumberOfInstances > 999 &'
    'NumberOfFeatures >= 20 &'
    'NumberOfFeatures <= 1000 &'
    'NumberOfInstances <= 20000'
)[[
    'name', 'did', 'NumberOfClasses',
    'NumberOfInstances', 'NumberOfFeatures'
]]
\end{lstlisting}

\begin{table}[!htb]
\caption{OpenML datasets used in this evaluation.}
\label{atab:openml-data}
\begin{center}
{\tiny
\begin{tabular}{lcccc}
\toprule
Name & OpenML ID & $L$ & $n$ & $d$ \\
\midrule
mfeat-fourier & 14 & 10 & 2000 & 77 \\
mfeat-karhunen & 16 & 10 & 2000 & 65 \\
mfeat-zernike & 22 & 10 & 2000 & 48 \\
optdigits & 28 & 10 & 5620 & 65 \\
spambase & 44 & 2 & 4601 & 58 \\
waveform-5000 & 60 & 3 & 5000 & 41 \\
satimage & 182 & 6 & 6430 & 37 \\
fri-c3-1000-25 & 715 & 2 & 1000 & 26 \\
fri-c4-1000-100 & 718 & 2 & 1000 & 101 \\
pol & 722 & 2 & 15000 & 49 \\
fri-c4-1000-25 & 723 & 2 & 1000 & 26 \\
ailerons & 734 & 2 & 13750 & 41 \\
puma32H & 752 & 2 & 8192 & 33 \\
cpu-act & 761 & 2 & 8192 & 22 \\
fri-c4-1000-50 & 797 & 2 & 1000 & 51 \\
fri-c3-1000-50 & 806 & 2 & 1000 & 51 \\
bank32nh & 833 & 2 & 8192 & 33 \\
fri-c1-1000-50 & 837 & 2 & 1000 & 51 \\
fri-c0-1000-25 & 849 & 2 & 1000 & 26 \\
fri-c2-1000-50 & 866 & 2 & 1000 & 51 \\
fri-c2-1000-25 & 903 & 2 & 1000 & 26 \\
fri-c0-1000-50 & 904 & 2 & 1000 & 51 \\
fri-c1-1000-25 & 917 & 2 & 1000 & 26 \\
mfeat-fourier & 971 & 2 & 2000 & 77 \\
waveform-5000 & 979 & 2 & 5000 & 41 \\
optdigits & 980 & 2 & 5620 & 65 \\
mfeat-zernike & 995 & 2 & 2000 & 48 \\
mfeat-karhunen & 1020 & 2 & 2000 & 65 \\
pc4 & 1049 & 2 & 1458 & 38 \\
pc3 & 1050 & 2 & 1563 & 38 \\
kc1 & 1067 & 2 & 2109 & 22 \\
pc1 & 1068 & 2 & 1109 & 22 \\
PizzaCutter3 & 1444 & 2 & 1043 & 38 \\
PieChart3 & 1453 & 2 & 1077 & 38 \\
cardiotocography & 1466 & 10 & 2126 & 36 \\
first-order-theorem-proving & 1475 & 6 & 6118 & 52 \\
hill-valley & 1479 & 2 & 1212 & 101 \\
ozone-level-8hr & 1487 & 2 & 2534 & 73 \\
qsar-biodeg & 1494 & 2 & 1055 & 42 \\
ringnorm & 1496 & 2 & 7400 & 21 \\
wall-robot-navigation & 1497 & 4 & 5456 & 25 \\
steel-plates-fault & 1504 & 2 & 1941 & 34 \\
twonorm & 1507 & 2 & 7400 & 21 \\
autoUniv-au1-1000 & 1547 & 2 & 1000 & 21 \\
cardiotocography & 1560 & 3 & 2126 & 36 \\
hill-valley & 1566 & 2 & 1212 & 101 \\
GesturePhaseSegmentationProcessed & 4538 & 5 & 9873 & 33 \\
thyroid-ann & 40497 & 3 & 3772 & 22 \\
texture & 40499 & 11 & 5500 & 41 \\
Satellite & 40900 & 2 & 5100 & 37 \\
steel-plates-fault & 40982 & 7 & 1941 & 28 \\
sylvine & 41146 & 2 & 5124 & 21 \\
ada & 41156 & 2 & 4147 & 49 \\
microaggregation2 & 41671 & 5 & 20000 & 21 \\
Sick-numeric & 41946 & 2 & 3772 & 30 \\
mfeat-factors & 12 & 10 & 2000 & 217 \\
isolet & 300 & 26 & 7797 & 618 \\
mfeat-factors & 978 & 2 & 2000 & 217 \\
gina-agnostic & 1038 & 2 & 3468 & 971 \\
gina-prior2 & 1041 & 10 & 3468 & 785 \\
gina-prior & 1042 & 2 & 3468 & 785 \\
cnae-9 & 1468 & 9 & 1080 & 857 \\
madelon & 1485 & 2 & 2600 & 501 \\
semeion & 1501 & 10 & 1593 & 257 \\
clean2 & 40666 & 2 & 6598 & 169 \\
Speech & 40910 & 2 & 3686 & 401 \\
mfeat-pixel & 40979 & 10 & 2000 & 241 \\
USPS & 41082 & 10 & 9298 & 257 \\
madeline & 41144 & 2 & 3140 & 260 \\
philippine & 41145 & 2 & 5832 & 309 \\
gina & 41158 & 2 & 3153 & 971 \\
\bottomrule
\end{tabular}
}
\end{center}
\end{table}

\paragraph{License.}
The license regarding the OpenML platform and all the empirical data
and metadata are discussed in \url{https://www.openml.org/cite}. The
empirical data and metadata are free to use under CC-BY license. The
OpenML platform and libraries are BSD licensed.

\subsection{Details for {\fnn}FL scaling} \label{asec:emp-eval:scaling}

We consider the 6 datasets for evaluating the scaling of Train{\fnn}FL
(Algorithm~\ref{alg:dist-train-infer}) with the number of parties
$\tau$, when the data is evenly split between all parties. The details
regarding the datasets are provided in Table~\ref{atab:data-stats}.

\begin{table}[!htb]
\caption{Details of a subset of the datasets. For CIFAR-10 and
  CIFAR-100, we collapse the 3 color channels and then flatten the $32
  \times 32$ images to points in $\Real^{1024}$. For MNIST and
  Fashion-MNIST, we flatten the $28 \times 28$ images to points in
  $\Real^{784}$.}
\label{atab:data-stats}
\begin{center}
{\small
\begin{tabular}{lcccc}
\toprule
Dataset & $n$ & $d$ & $L$ & Obtained from \\
\midrule
Digits        & $1797$  & $64$   & $10$  & OpenML\\
Letters       & $20000$ & $16$   & $26$  & OpenML\\
MNIST         & $60000$ & $784$  & $10$  & Tensorflow\\
Fashion-MNIST & $60000$ & $784$  & $10$  & Tensorflow\\
CIFAR-10      & $50000$ & $1024$ & $10$  & Tensorflow\\
CIFAR-100     & $50000$ & $1024$ & $100$ & Tensorflow\\
\bottomrule
\end{tabular}
}
\end{center}
\end{table}

\begin{table}[!htb]
\caption{\fnn hyper-parameters used for the scaling experiment.}
\label{atab:scaling-hp}
\begin{center}
{\small
\begin{tabular}{lcccc}
\toprule
Dataset & $m$ & $s/d$ & $\rho$ & $\gamma$ \\
\midrule
Digits        & $256 \times 64$ & 0.3 & 32 & 0 \\
Letters       & $1447 \times 16$ & 0.5 & 221 & 0.1 \\
MNIST         & $217 \times 784$ & 0.025 & 17 & 0.51 \\
Fashion-MNIST & $138 \times 784$ & 0.105 & 8 & 0.8 \\
CIFAR-10      & $217 \times 1024$ & 0.026 & 26 & 0.51 \\
CIFAR-100     & $217 \times 1024$ & 0.026 & 26 & 0.51 \\
\bottomrule
\end{tabular}
}
\end{center}
\end{table}

\paragraph{License.}
The license for the OpenML datasets are discussed in \S
\ref{asec:emp-eval:openml}. We obtain the widely used vision datasets
from \texttt{tensorflow.keras.datasets} module in
Tensorflow~\citep{abadi2016tensorflow}.

\paragraph{Raw runtimes.}
We also present the raw runtimes that were used to generate the
speedup plot in Figure~\ref{fig:scaling} in Table~\ref{atab:scaling}.

\begin{table}[!htb]
\caption{Raw runtimes $T$ (in seconds) and speedups $S$.}
\label{atab:scaling}
\begin{center}
{\tiny
\begin{tabular}{lccccccccc}
\toprule
Dataset & $T$($\tau = 1$) & $T$($\tau = 2$) & $S$($\tau = 2$) & $T$($\tau = 4$) & $S$($\tau = 4$) & $T$($\tau = 8$) & $S$($\tau = 8$) & $T$($\tau = 16$) & $S$($\tau = 16$) \\
\midrule
Digits & 3.63$\pm$0.06 & 3.26$\pm$0.10 & 1.11$\pm$0.04 & 1.84$\pm$0.06 & 1.97$\pm$0.07 & 1.36$\pm$0.05 & 2.67$\pm$0.11 & 1.16$\pm$0.03 & 3.12$\pm$0.06 \\
Letter & 25.72$\pm$0.41 & 14.74$\pm$0.42 & 1.75$\pm$0.04 & 7.72$\pm$0.20 & 3.33$\pm$0.11 & 4.06$\pm$0.33 & 6.38$\pm$0.50 & 2.91$\pm$0.07 & 8.85$\pm$0.25 \\
MNIST & 1023.59$\pm$14.88 & 518.85$\pm$8.19 & 1.97$\pm$0.04 & 262.68$\pm$5.98 & 3.90$\pm$0.08 & 163.75$\pm$5.99 & 6.26$\pm$0.29 & 122.30$\pm$6.80 & 8.39$\pm$0.44 \\
Fashion-MNIST & 1410.29$\pm$13.66 & 712.98$\pm$3.59 & 1.98$\pm$0.02 & 360.74$\pm$5.47 & 3.91$\pm$0.07 & 241.70$\pm$8.33 & 5.84$\pm$0.18 & 191.15$\pm$1.42 & 7.38$\pm$0.09 \\
CIFAR-10 & 1300.90$\pm$36.63 & 644.99$\pm$4.70 & 2.02$\pm$0.06 & 330.35$\pm$7.21 & 3.94$\pm$0.10 & 207.57$\pm$8.27 & 6.28$\pm$0.33 & 151.67$\pm$3.70 & 8.58$\pm$0.23 \\
CIFAR-100 & 1268.86$\pm$7.85 & 649.20$\pm$5.43 & 1.95$\pm$0.02 & 333.33$\pm$9.36 & 3.81$\pm$0.11 & 211.20$\pm$11.73 & 6.03$\pm$0.33 & 162.80$\pm$1.26 & 7.79$\pm$0.08 \\
\bottomrule
\end{tabular}
}
\end{center}
\end{table}

\subsection{Details on \sbfc baseline} \label{asec:emp-eval:sbfc}

To ablate the effect of the high level of sparsity in \fh, we utilize
the binary \sh~\citep{charikar2002similarity} based locality sensitive
bloom filter for each class in place of \fbf to get \sh Bloom Filter
classifier (\sbfc). \sh is binary like the \fh we consider, however,
it is not explicitly sparse as \fh. In fact, the number of non-zeros
in \fh is $\rho$, while for \sh with dimensionality $m$, in
expectation, we would expect $\approx m/2$ non-zeros in the \sh. We
tune over the \sh projected dimension $m$, considering $m < d$
(traditional regime where \sh is usually employed) and $m > d$ (as in
\fh). For the same $m$, \sh is more costly ($\sim O(md)$ per point)
than \fh ($\sim O(ms + m \log \rho)$) since $s \ll d$, involving a
dense matrix-vector product instead of a sparse matrix-vector one. The
dimensionality of the \sh $m$ is the hyper-parameter we search over --
we consider both projecting down in the range $m \in [1, d]$ (the
traditional use) and projecting up $m \in [d, 2048d]$, where $d$ is
the data dimensionality.

\subsection{Dependence on hyper-parameters}
\label{asec:emp-eval:hpdep}
\begin{table}[!htb]
\caption{Datasets from OpenML used study of \fnn hyper-parameter
  dependence.}
\label{atab:hpdep-data}
\begin{center}
\begin{small}
\begin{tabular}{lccc}
\toprule
dataset & $n$ & $d$ & $L$ \\
\midrule
Digits        & $1797$  & $64$   & $10$   \\
Letters       & $20000$ & $16$   & $26$   \\
Segment       & $2310$  & $19$   & $7$    \\
Gina Prior 2  & $3468$  & $784$  & $10$   \\
USPS          & $9294$  & $256$  & $10$   \\
Madeline      & $3140$  & $259$  & $2$    \\
\bottomrule
\end{tabular}
\end{small}
\end{center}
\end{table}

We study the effect of the different \fnn hyper-parameters: (i) the
\fh dimension $m$, (ii) the NNZ per-row $s \ll d$ in $\bM$, (iii) the
NNZ $\rho$ in the \fh, and (iv) the \fbf decay rate $\gamma$. We
consider $6$ OpenML datasets (see Table~\ref{atab:hpdep-data} for
details on the datasets). For every hyper-parameter setting, we
compute the $10$-fold cross-validated classification accuracy ($1 - $
misclassification rate). We vary each hyper-parameter while fixing the
others. The results for each of the hyper-parameters and datasets are
presented in Figures~\ref{afig:hpdep3-1} \& \ref{afig:hpdep3-2}. We
evaluate the following configurations for the evaluation of each of
the hyper-parameters:

\begin{itemize}
\item {\bf \fh dimension $m$:} We try $10$ values for $m \in [4d,
  4096d]$ with $(s/d) \in \{ 0.1, 0.3 \}$, $\rho \in \{ 8, 32 \}$,
  $\gamma \in \{0. , 0.5\}$.
\item {\bf Projection density $s/d$:} We try $10$ values for $(s/d)
  \in [0.1, 0.8]$ with $m \in \{ 256, 1024 \}$, $\rho \in \{ 8, 32
  \}$, $\gamma \in \{0, 0.5 \}$.
\item {\bf \fh NNZ $\rho$:} We try $10$ values for $\rho \in [4, 256]$
  with $m \in \{ 256, 1024 \}$, $(s/d) \in \{ 0.1, 0.3 \}$, $\gamma
  \in \{ 0, 0.5 \}$.
\item {\bf \fbf decay rate $\gamma$:} We try $10$ values for $\gamma
  \in [0.1, 0.8] $ and $\gamma = 0$ with $m \in \{ 256, 1024 \}$,
  $(s/d) \in \{ 0.1, 0.3 \}$, $\rho \in \{ 8, 32 \}$.
\end{itemize}

\begin{figure*}[thb]
\centering
\begin{subfigure}{0.235\textwidth}
  \centering
  \includegraphics[width=\textwidth]{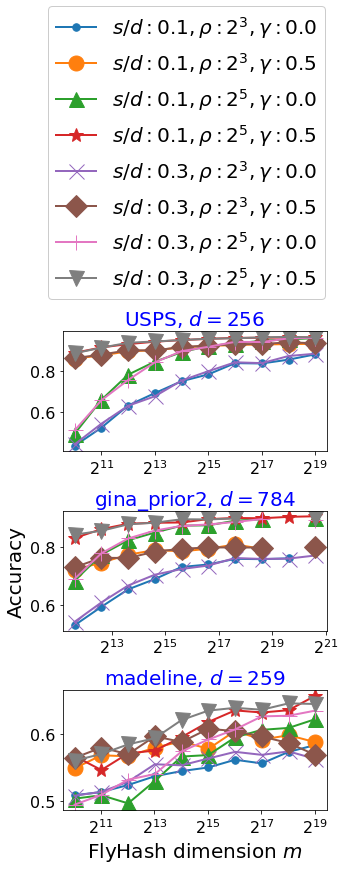}
  \caption{\fh dimension $m$}
  \label{afig:hpdep3-ef-1}
\end{subfigure}
~
\begin{subfigure}{0.235\textwidth}
  \centering
  \includegraphics[width=\textwidth]{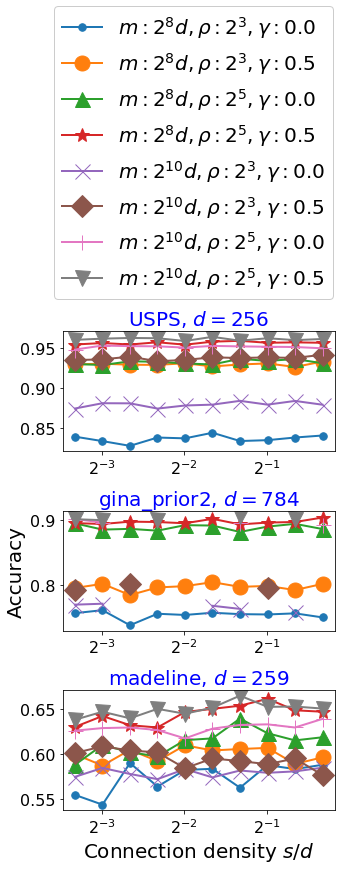}
  \caption{Projection density $s/d$}
  \label{afig:hpdep3-cs-1}
\end{subfigure}
~
\begin{subfigure}{0.235\textwidth}
  \centering
  \includegraphics[width=\textwidth]{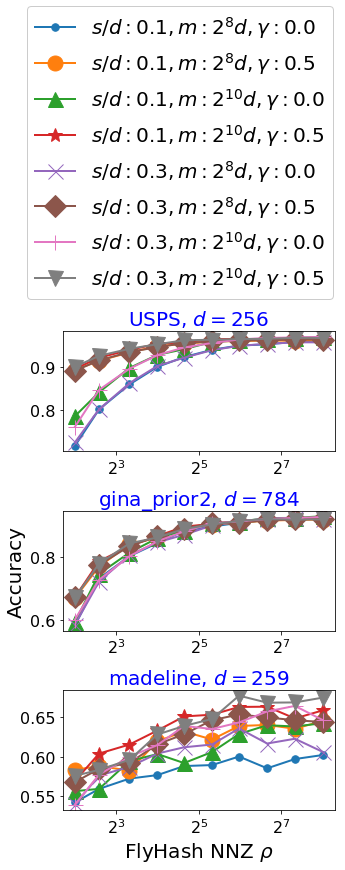}
  \caption{\fh NNZ $\rho$}
  \label{afig:hpdep3-wn-1}
\end{subfigure}
~
\begin{subfigure}{0.235\textwidth}
  \centering
  \includegraphics[width=\textwidth]{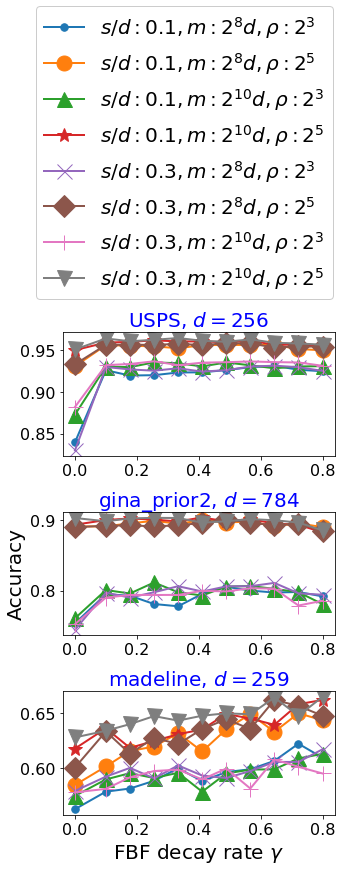}
  \caption{\fbf decay rate $\gamma$}
  \label{afig:hpdep3-c-1}
\end{subfigure}
\caption{{\bf \fnn hyper-parameter dependence -- Part I.} Effect
  of the different \fnn hyper-parameters $m$, $s$, $\rho$,
  $\gamma$ on \fnn predictive performance for $3$ datasets -- the
  horizontal axes correspond to the hyper-parameter being varied
  while fixing the remaining hyper-parameters. The vertical axes
  correspond to the 10-fold cross-validated accuracy for the given
  hyper-parameter configuration ({\em higher is better}). Note the
  log scale on the horizontal axes. {\em Please view in color.} }
\label{afig:hpdep3-1}
\end{figure*}
\begin{figure*}[thb]
\centering
\begin{subfigure}{0.235\textwidth}
  \centering
  \includegraphics[width=\textwidth]{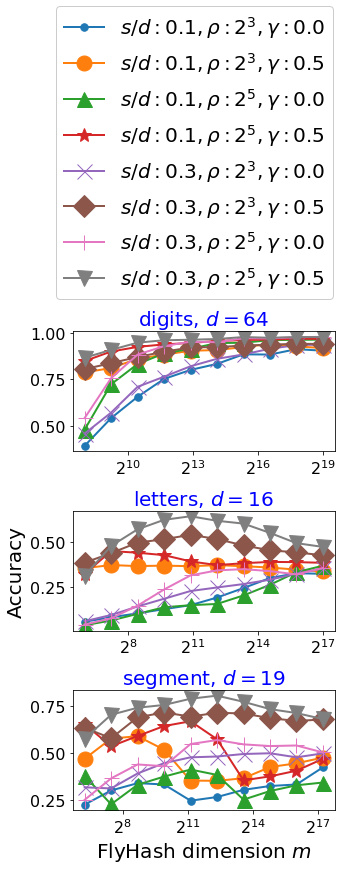}
  \caption{\fh dimension $m$}
  \label{afig:hpdep3-ef-2}
\end{subfigure}
~
\begin{subfigure}{0.235\textwidth}
  \centering
  \includegraphics[width=\textwidth]{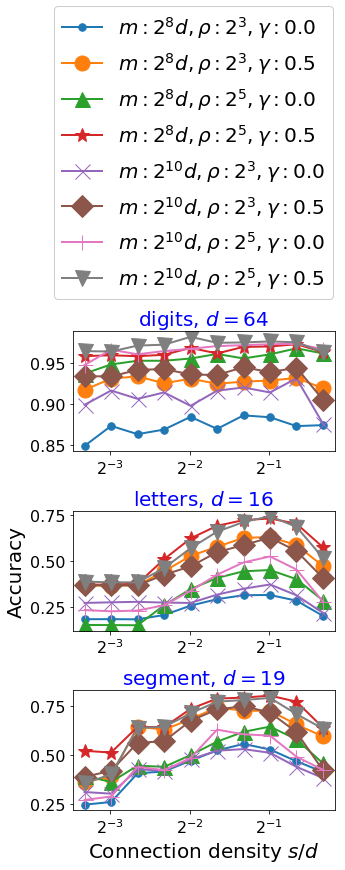}
  \caption{Projection density $s/d$}
  \label{afig:hpdep3-cs-2}
\end{subfigure}
~
\begin{subfigure}{0.235\textwidth}
  \centering
  \includegraphics[width=\textwidth]{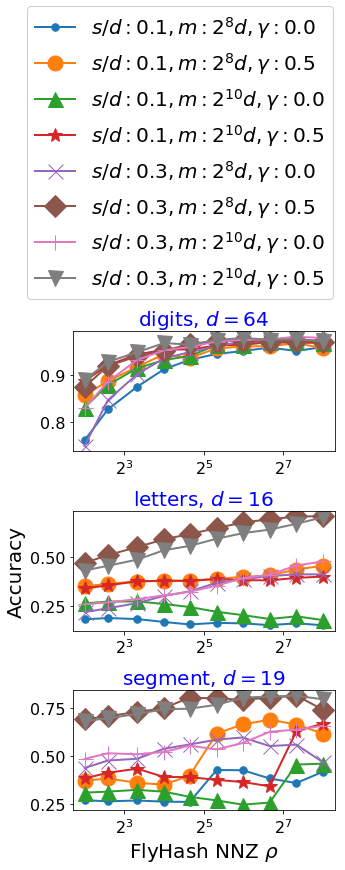}
  \caption{\fh NNZ $\rho$}
  \label{afig:hpdep3-wn-2}
\end{subfigure}
~
\begin{subfigure}{0.235\textwidth}
  \centering
  \includegraphics[width=\textwidth]{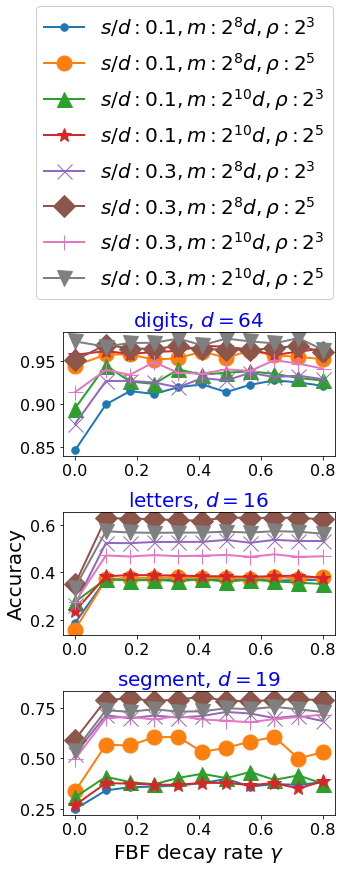}
  \caption{\fbf decay rate $\gamma$}
  \label{afig:hpdep3-c-2}
\end{subfigure}
\caption{{\bf \fnn hyper-parameter dependence -- Part II.} Effect of
  the different \fnn hyper-parameters $m$, $s$, $\rho$, $\gamma$ on
  \fnn predictive performance for $3$ datasets -- the horizontal axes
  correspond to the hyper-parameter being varied while fixing the
  remaining hyper-parameters. The vertical axes correspond to the
  10-fold cross-validated accuracy for the given hyper-parameter
  configuration ({\em higher is better}). Note the log scale on the
  horizontal axes. {\em Please view in color.} }
\label{afig:hpdep3-2}
\end{figure*}

The results in
Figures~\ref{afig:hpdep3-ef-1}~\&~\ref{afig:hpdep3-ef-2} indicate
that, for fixed $\rho$ increasing $m$ improves the \fnn accuracy,
aligning with the theoretical guarantees, up until an upper
bound. This behavior is clear for high dimensional datasets. This
behavior is a bit more erratic for the lower dimensional sets. Larger
values of $m$ improve performance, since it allows us to capture each
class' distribution with smaller random overlap between each class'
{\fbf}s. But the theoretical guarantees also indicate that $\rho$
needs to be large enough, and if $m$ grows too large for any given
$\rho$, the \fnn accuracy might not improve any further.

Figures~\ref{afig:hpdep3-cs-1}~\&~\ref{afig:hpdep3-cs-2} indicate that
for lower dimensional data (such as $d \leq 20$), increasing the
projection density $s$ improves performance up to a point (around $s =
0.5$), after which the performance starts degrading. This is probably
because for smaller values of $s$, not enough information is captured
by the sparse projection for small $d$; for large values of $s$, each
row in the projection matrix $\bM$ become similar to each other,
hurting the similarity-preserving property of \fh. For higher
dimensional datasets, the \fnn performance appears to be somewhat
agnostic to $s$ for any fixed $m$, $\rho$ and $\gamma$.

Figures~\ref{afig:hpdep3-wn-1}~\&~\ref{afig:hpdep3-wn-2} indicate that
increase in $\rho$ leads to improvement in \fnn performance since
large values of $\rho$ better preserve pairwise similarities. However,
if $\rho$ is too large relative to $m$, the sparsity of the subsequent
per-class \fbf go down, thereby leading to more overlap in the
per-class {\fbf}s. So $\rho$ needs to large as per the theoretical
analysis, but not too large.

Figures~\ref{afig:hpdep3-c-1}~\&~\ref{afig:hpdep3-c-2} indicate that
the \fnn is somewhat agnostic to the \fbf decay rate $\gamma$ for any
value strictly greater than $0$ (corresponding to the binary
\fbf). But there is a significant drop in the \fnn performance from
$\gamma > 0$ to $\gamma = 0$ across all dataset -- this behavior is
fairly consistent and apparent.

\subsection{Comparison to gradient descent based baseline} \label{asec:emp-eval:gd-based}

While we can compare nearest-neighbor-based methods (such as \knnc) to gradient-descent-based methods by comparing the best possible optimally tuned performances of each method, it is hard to do such a comparison with some form of parity in terms of computational or communication overhead. We attempt to draw one form of parity based on the communication overhead. Our first proposed scheme for {\fnn}FL has a communication overhead of $O(m)$ where $m$ can be $O(n)$ is the most unfavorable setup. To this end, we wish to compare all schemes for an overall communication budget of $O(n)$. For stochastic gradient-descent based methods, involving communication in every gradient update on a batch of data per-party, a single epoch (a single pass of the data per-party) would correspond to $O(n)$ communication. This is also appropriate for another reason: all the training algorithms -- Train{\fnn} (Algorithm~\ref{alg:train-infer}), Train{\fnn}FLDP (Algorithm~\ref{alg:dist-train-infer}) -- involve a single pass of the training data. Henceforth, we consider two baselines -- logistic regression (\lr) trained with a single epoch, and multi-layered perceptrons (\mlpc) trained with a single epoch. To further remove confounding effects, and obtain the best-case performance for these gradient-descent-based schemes with a single epoch, we consider a centralized training for \lr and \mlpc instead of a true FL training; the performance of the centralized training usually provides an upperbound on the expected performance with FL training.

\paragraph{Hyper-parameter selection.}
For \lr, we consider logistic regression trained for a single epoch with a stochastic algorithm. We utilize the \sklearn implementation (\texttt{linear\_model.LogisticRegression}) and tune over the following hyper-parameters -- (a) penalty type ($\ell_1$/$\ell_2$), (b) regularization $\in \left[ 2^{-10}, 2^{10} \right]$, (c) choice of solver (liblinear~\citep{fan2008liblinear}/SAG~\citep{schmidt2017minimizing}/SAGA~\citep{defazio2014saga}), (d) with/without intercept, (e) one-vs-rest or multinomial for multi-class, (f) with/without class balancing (note that this class balancing operation makes this a two-pass algorithm since we need the first pass to weigh the classes appropriately). We consider a total of 960 hyper-parameter configurations for each of the 70 OpenML dataset.
For \mlpc, we consider a multi-layer perceptron trained for a single epoch with the ``Adam'' stochastic optimization scheme~\citep{kingma2014adam}. We use \texttt{sklearn.neural\_network.MLPClassifier} and tune over the following hyper-parameters -- (a) number of hidden layers $\{1, 2\}$, (b) number of nodes in each hidden layer $\{16, 64, 128\}$, (b) choice of activation function (ReLU/HyperTangent), (d) regularization, (e) batch size $ \in \left[2, 2^8\right]$, (f) initial learning rate $\in \left[10^{-5}, 0.1\right]$ (the rest of the hyper-parameters are left as \sklearn defaults). This leads to a total of 720 hyper-parameters configurations for each of the 70 OpenML datasets.

\begin{figure}
\centering
\includegraphics[width=0.95\textwidth]{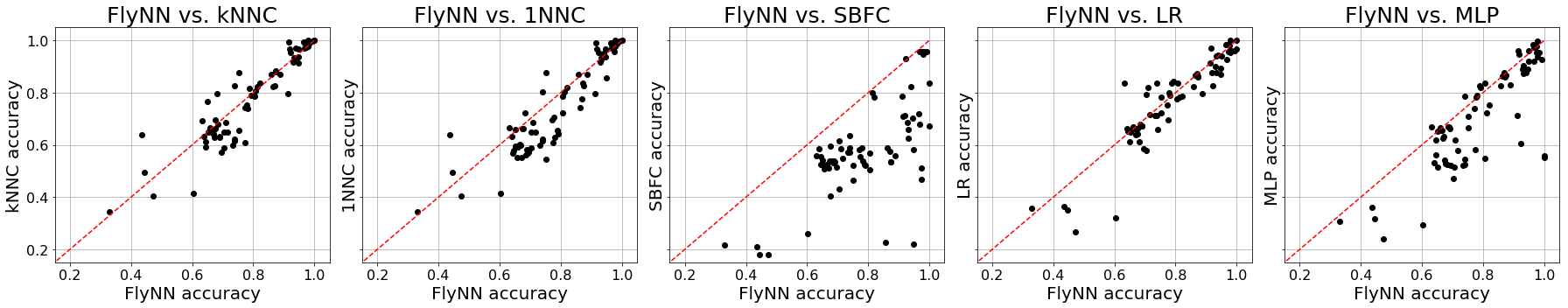}
\caption{Performance of baselines relative to \fnn on {\em OpenML} datasets. The scatter plots compare the best tuned \fnn accuracy against that of \knnc, \onennc, \sbfc, \lr and \mlpc with a point for each dataset, and the red dashed diagonal marking match to \fnn accuracy.}
\label{afig:flynn-base}
\end{figure}
\begin{table}[tb]
\caption{Comparing \fnn to baselines on OpenML datasets with (i) Fraction of datasets \fnn exceed baselines, (ii) Number of datasets on which \fnn has wins(W)/ties(T)/losses(L) over baselines, (iii) Median improvement in normalized accuracy by \fnn over baseline across all datasets, (iv) $p$-values for the paired two-sided t-test (TT), (v) $p$-values for the two-sided Wilcoxon signed rank test (WSRT).}
\label{atab:openml-ind-comp}
\begin{center}
{
\begin{tabular}{lccccc}
\toprule
\textsc{Method} & (i) {\sc Frac.} & (ii) {\sc W/T/L} & (iii) {\sc Imp.} & (iv) {\sc TT} & (v) {\sc WSRT}  \\
\midrule
\knnc           & 0.55            & 39/2/30    & 0.35\% & 5.30E-2 & 7.63E-2  \\
\onennc         & 0.66            & 47/2/22    & 2.36\% & 1.55E-5 & 2.81E-5  \\
\sbfc           & 0.99            & 70/0/1     & 25.4\% & $<$1E-8 & $<$1E-8  \\
\midrule
\lr             & 0.62            & 44/1/26    & 1.80\% & 4.06E-2 & 7.68E-3  \\
\mlpc           & 0.75            & 53/0/18    & 6.17\% & 7.00E-8 & 1.00E-8  \\
\bottomrule
\end{tabular}
}
\end{center}
\end{table}

Results similar in form to the results in the main paper (Figure~\ref{fig:flynn-base}, Table~\ref{tab:openml-ind-comp}) are presented in Figure~\ref{afig:flynn-base} and Table~\ref{atab:openml-ind-comp}. The results for \knnc, \onennc and \sbfc are repeated for ease of comparison. \fnn performs comparably to \lr (with a single epoch), outperforming it on 62\% of the datasets, and providing a 1.80\% median improvement with $p$-values of 0.0406 (TT) and 0.00768 (WSRT), indicating significant difference. When compared to \mlpc (with a single epoch), \fnn has a much more favorable performance, outperforming on 75\% of the datasets, and providing a 6.17\% median improvement and $p$-values of the order of $10^{-8}$, indicating significant improvement. This highlights the ability of \fnn to match or outperform gradient-descent-based methods (which are widely studied in the FL setting) when compared with some form of parity in the communication overhead across over 70 datasets from OpenML.

However, this comparison should be considered with the understanding that comparing inherently different machine learning methods can involve various caveats. In this comparison, we tried to explicitly discuss the choices we made and why we made them.

\subsection{Empirical evaluation of Differentially Private {\fnn}FL} \label{asec:emp-eval:dp}

To study the effect of $(\epsilon, 0)$-differential privacy on the classification performance of {\fnn}FL, we consider two sets of experiment. In one set of experiments, we consider the synthetic dataset (described in supplement \S \ref{asec:emp-eval:syndata}), while in another, we consider the a binary classification version of the MNIST dataset, where we are trying to classify between the (hard-to-classify) digits 3 and 8.

\paragraph{Synthetic data.}
In this set of experiments, we generate classification data in $\R^d$ with $d = 30$ for a 2-class classification problem with 5 modes per class. We create two training datasets, one with $n = 10^4$ samples and another with $n = 10^5$ samples to study the effect of increasing the number of samples. To evaluate the performance, we utilize a heldout test set of size $10^3$ samples.

We consider 4 sets of hyper-parameters for \fnn: 
\begin{enumerate}
    \item $m = 300, s = 3, \rho = 15, \gamma = 0.9$,
    \item $m = 300, s = 3, \rho=30, \gamma = 0.9$,
    \item $m = 600, s = 3, \rho=15, \gamma = 0.9$,
    \item $m = 600, s = 3, \rho=30, \gamma = 0.9$.
\end{enumerate}

We consider a 2-party problem ($\tau = 2$) where the training data is evenly between the parties, each with $\nicefrac{n}{2}$ samples.
For each hyper-parameter, we first invoke Train{\fnn}FLDP with {\tt IS\_DP = false} and record the non-DP test error. Then, we invoke Train{\fnn}FLDP with DP enabled (that is {\tt IS\_DP = true}) with different values of $\epsilon \in [0.25, 2.0]$ and $T \in [4, 600]$. The test error for each pair of $(\epsilon, T)$ is noted -- we perform 10 repetitions for each pair of $(\epsilon, T)$ and report the mean and the confidence interval of the obtained test error. The results are presented in Figure~\ref{afig:dp-all-trials-eps-linscale}. The results in Figure~\ref{fig:dp-eps-T} presents one of these results for $n = 10^5$.

\begin{figure}[t]
\centering
\includegraphics[width=\textwidth]{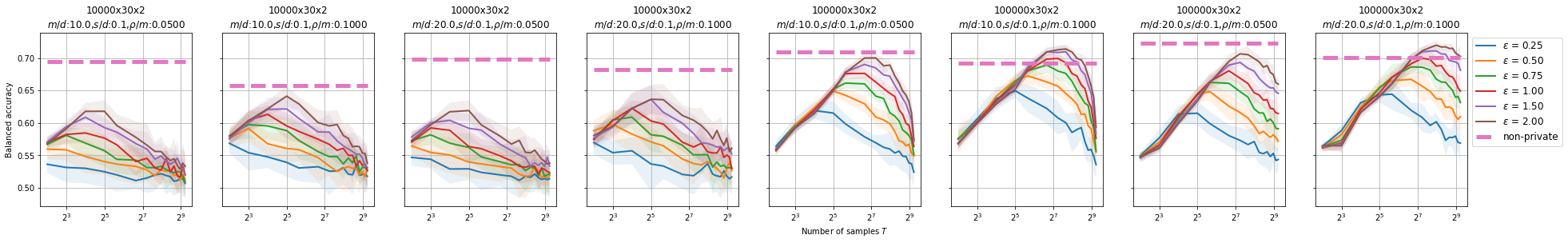}
\caption{Effect of DP on the {\fnn}FL accuracy for varying values of $\epsilon \in [0.25, 2]$ and $T \in [4, 600]$. The horizontal axis corresponds to the number of samples $T$ while the vertical axis corresponds to balanced classification accuracy (higher is better). In each of the plots, the horizontal dashed line corresponds to the accuracy of the non-private {\fnn}FL (Train{\fnn}FLDP invoked with {\tt IS\_DP = false}). Each of the lines correspond to a particular value of $\epsilon$, and plots the mean accuracy over 10 repetitions with the confidence interval in the form of translucent ribbons around the lines. The 4 left plots correspond to the 4 \fnn hyper-parameters with $n = 10^4$ samples, while the 4 right plots correspond to the 4 \fnn hyper-parameters with $n = 10^5$ samples.}
\label{afig:dp-all-trials-eps-linscale}
\end{figure}

The results indicate the intuitive behaviour that, as the number of samples $T$ increases, the {\fnn}FL performance improves up until a point and then it starts dropping. This behaviour is seen for all values of $\epsilon$ and all different hyper-parameters $(m, s, \rho, \gamma)$ of \fnn. Comparing the 4 left plots (with $n = 10^4$ samples) to the 4 right plots (with $n = 10^5$ samples), we see that, with increasing number of points, the non-private test accuracy is not significantly affected, but the performance of the DP {\fnn}FL improves significantly, and is able to almost match the non-private accuracy for all the \fnn hyper-parameters for moderately high $\epsilon \in [1, 2]$.

Figure~\ref{afig:dp-all-trials-eps-vs-acc} presents the dependence of the {\fnn}FL accuracy on $\epsilon$ more explicitly. The experimental setup is the same as Figure~\ref{afig:dp-all-trials-eps-linscale}, and, for each value of $\epsilon$, we select the $T$ with the best accuracy on the heldout set. We see the expected behaviour with the accuracy increasing with $\epsilon$.  The results in Figure~\ref{fig:dp-eps-v-acc} presents one of these results for $n = 10^5$.

\begin{figure}[bt]
\centering
\includegraphics[width=\textwidth]{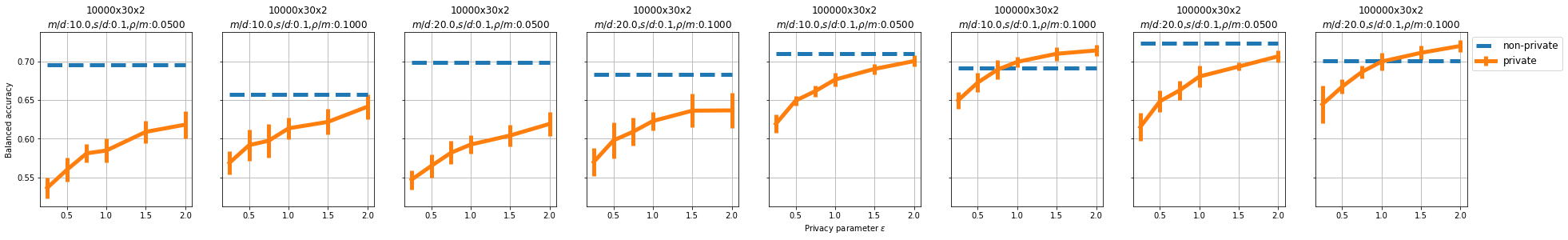}
\caption{Effect of DP on the {\fnn}FL accuracy for varying values of $\epsilon \in [0.25, 2]$. The horizontal axis corresponds to the privacy level $\epsilon$, while the vertical axis corresponds to balanced classification accuracy (higher is better). In each of the plots, the blue horizontal dashed line corresponds to the accuracy of the non-private {\fnn}FL (Train{\fnn}FLDP invoked with {\tt IS\_DP = false}). In each of the plots, the orange line corresponds to the best (over all values of $T$) mean $\pm$ standard deviation accuracy (aggregated over 10 repetitions) of the DP-enabled {\fnn}FL with increasing $\epsilon$. The 4 left plots correspond to the 4 \fnn hyper-parameters with $n = 10^4$ samples, while the 4 right plots correspond to the 4 \fnn hyper-parameters with $n = 10^5$ samples.}
\label{afig:dp-all-trials-eps-vs-acc}
\end{figure}

\paragraph{MNIST 3 vs 8 binary classification.}
We also evaluate the effect of DP on MNIST. For the binary 3 vs. 8 classification problem, $n \approx 12000$ and $d = 784$. We select the following \fnn hyper-parameters: $m/d = 30, s/d = 0.00625, \rho = 0.1 m, \gamma = 0.9$. We perform a similar experiment as with synthetic data and compare the performance of non-private Train{\fnn}FLDP with 2 parties ($\tau = 2$) to that of the DP-enabled Train{\fnn}FLDP with different values of $\epsilon \in [0.25, 2]$ and $T \in [4, 600]$. The results are presented in Figure~\ref{afig:dp-mnist}. The results in Figure~\ref{afig:dp-mnist3v8-all-trials} presents the aforementioned trend where the accuracy of DP {\fnn}FL improves with the number of samples $T$ up until a point at which it drops. Figure~\ref{afig:dp-mnist3v8-all-trials-eps-vs-acc} presents the expected trend of the improving performance of DP {\fnn}FL with increasing $\epsilon$.

\begin{figure}
\centering
\begin{subfigure}{0.45\textwidth}
\includegraphics[width=\textwidth]{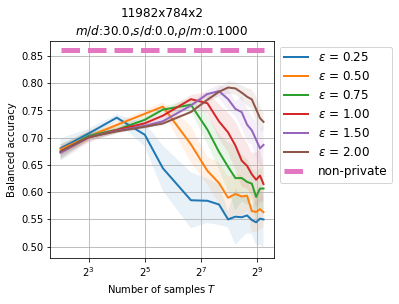}
\caption{$T$ vs accuracy for varying values of $\epsilon$.}
\label{afig:dp-mnist3v8-all-trials}
\end{subfigure}
~
\begin{subfigure}{0.45\textwidth}
\includegraphics[width=\textwidth]{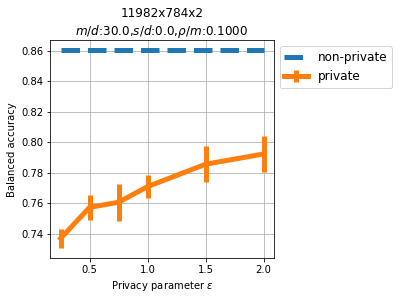}
\caption{$\epsilon$ vs accuracy}
\label{afig:dp-mnist3v8-all-trials-eps-vs-acc}
\end{subfigure}
\caption{Effect of the {\fnn}FL accuracy for varying values of $\epsilon \in [0.25, 2]$ and $T \in [4, 600]$ with MNIST dataset for a binary classification problem with labels $3$ and $8$. In figure~\ref{afig:dp-mnist3v8-all-trials}, the horizontal axis corresponds to the number of samples $T$ while the vertical axis corresponds to balanced classification accuracy (higher is better); the horizontal dashed line corresponds to the accuracy of the non-private {\fnn}FL (Train{\fnn}FLDP invoked with {\tt IS\_DP = false}), and each of the lines correspond to a particular value of $\epsilon$, and plots the mean accuracy over 10 repetitions with the confidence interval in the form of translucent ribbons around the lines. In figure~\ref{afig:dp-mnist3v8-all-trials-eps-vs-acc}, the horizontal axis corresponds to the privacy level $\epsilon$, while the vertical axis corresponds to balanced classification accuracy (higher is better); the blue horizontal dashed line corresponds to the accuracy of the non-private {\fnn}FL (Train{\fnn}FLDP invoked with {\tt IS\_DP = false}), and the orange line corresponds to the best (over all values of $T$) mean $\pm$ standard deviation accuracy (aggregated over 10 repetitions) of the DP-enabled {\fnn}FL with increasing $\epsilon$.}
\label{afig:dp-mnist}
\end{figure}

\end{document}